\newcommand{\D}{\nabla}
\newcommand{\dx}{\Delta \theta_{nt}}
\newcommand{\C}{\choose}
\newcommand{\se}{\sqrt{\epsilon}}
\newcommand{\thetahat}{\widehat{\theta}}
\newcommand{\fbar}{\bar{f}}
\newcommand{\thetatil}{\widetilde{\theta}}
\newcommand{\dxtil}{\Delta \thetatil_{nt}}
\newtheorem{assumption}{Assumption}
\begin{document}

\begin{frontmatter}

\title{Robust empirical risk minimization via Newton's method}

\author[1]{Eirini Ioannou}
\ead{ei250@cantab.ac.uk}

\author[2]{Muni Sreenivas Pydi}
\ead{muni.pydi@lamsade.dauphine.fr}

\author[3]{Po-Ling Loh\corref{cor1}}
\ead{pll28@cam.ac.uk}

\cortext[cor1]{Corresponding author}
\affiliation[1]{organization={University of Edinburgh},
country = {United Kingdom}}
\affiliation[2]{organization={LAMSADE, University of  Paris Dauphine-PSL},
country = {France}}
\affiliation[3]{organization={University of Cambridge},
country = {United Kingdom}}

\begin{abstract}
A new variant of Newton's method for empirical risk minimization is studied, where at each iteration of the optimization algorithm, the gradient and Hessian of the objective function are replaced by robust estimators taken from existing literature on robust mean estimation for multivariate data. After proving a general theorem about the convergence of successive iterates to a small ball around the population-level minimizer, consequences of the theory in generalized linear models are studied when data are generated from Huber's epsilon-contamination model and/or heavy-tailed distributions. An algorithm for obtaining robust Newton directions based on the conjugate gradient method is also proposed, which may be more appropriate for high-dimensional settings, and conjectures about the convergence of the resulting algorithm are offered. Compared to robust gradient descent, the proposed algorithm enjoys the faster rates of convergence for successive iterates often achieved by second-order algorithms for convex problems, i.e., quadratic convergence in a neighborhood of the optimum, with a stepsize that may be chosen adaptively via backtracking linesearch.
\end{abstract}

\end{frontmatter}


\section{Introduction}

Statistical estimation via classical procedures often depends on strong model assumptions, which only hold in the absence of outliers and other deviations. However, many real-life data sets do not typically follow these model assumptions, necessitating the use of robust statistical methods~\cite{huber2011, rousseeuw2011robust, maronna2019robust}, which remain reasonably accurate even under deviations from the model assumptions. In this paper, we focus on situations where data are sampled from a small ball around a parametric distribution, according to Huber's $\epsilon$-contamination model. In other words, we have samples of the form $z_i \sim (1-\epsilon)P_{\theta^*}+\epsilon Q$, where $Q$ is an arbitrary distribution and the goal is to estimate the unknown parameter $\theta^*$ based on an observed data set $\{z_i\}_{i=1}^n$. We also analyze the behavior of the same algorithms in situations where data are generated from a heavy-tailed distribution. Although the parameter corresponds to the true data-generating distribution, ``outliers" are observed in the data set due to random sampling, and the goal is to obtain an estimator with similar high-probability guarantees as in the case of standard parameter estimation techniques for lighter-tailed distributions.

Classical robust statistics~\cite{huber2011} suggests the use of $M$-estimators, which involve optimizing an appropriate loss function over the space of parameters. More specifically, suppose we wish to estimate the parameter $\theta^* = \arg\min_{\theta \in \Theta} \mathcal{R}(\theta)$, where the risk $\mathcal{R}(\theta) = \E[\mathcal{L}(\theta,(x,y))]$ is the expectation of a loss function. In practice, one uses an empirical risk minimizer $\thetahat \in \arg\min_{\theta \in \Theta} \frac{1}{n} \sum_{i=1}^n \mathcal{L}(\theta, (x_i, y_i))$. Standard theory of parametric statistics shows that the optimal choice of $\mathcal{L}$ corresponds to the log-likelihood function when data are not contaminated. However, taking into account $\epsilon$-contamination leads to the use of other losses such as the Huber loss, which can be shown to be optimal in a minimax sense when the uncontaminated data are normally distributed~\cite{huber2011}. Similarly, while the least-squares loss corresponds to maximum likelihood for Gaussian errors, minimizing a different loss function may be advantageous in the case of heavy-tailed data.

In this paper, we adopt an alternative approach inspired by optimization methods~\cite{boyd2004}. Rather than seeking to design a robust loss, we introduce robustness into the estimation algorithm by implementing robust updates in an iterative second-order optimization procedure. Our work is directly inspired by the work of Prasad et al.~\cite{prasad2018}, who proposed and analyzed a first-order version of this method. Our algorithm, which we call ``robust Newton's method," utilizes the $AgnosticMean$ algorithm from Lai et al.~\cite{lai2016} in the Huber contamination setting to obtain robust gradient and Hessian estimates on each iterate of our algorithm. Moreover, given appropriate assumptions, we prove that the rate of convergence of this algorithm is faster than that of robust gradient descent, and successive iterates converge quadratically to a small ball around $\theta^*$. Furthermore, a suitable stepsize may be determined adaptively using a robust variant of backtracking linesearch. Our analysis of the Newton iterates is fairly general, and can be used to derive convergence guarantees when alternative procedures are employed for gradient/Hessian estimation. We consequently propose a method based on the conjugate gradient method~\cite{wright1999numerical} for obtaining approximate Newton directions which may be useful in higher dimensions, and discuss some conjectures about the corresponding convergence rate on $\epsilon$-contaminated data.


\subsection{Related Work}

Here, we discuss several other general approaches to robust empirical risk minimization (ERM) which have appeared in the literature. A variety of algorithms have been proposed based on median-of-means estimators, which give robust alternatives to mean estimators (a more detailed description is provided in Section~\ref{SecHT} below). Median-of-mean tournaments~\cite{lugosi2019risk, lugosi2019regularization, lugosi2019mean} provide a method for comparing pairs of candidate regression functions based on the number of blocks in which the empirical mean of the loss function is smaller for one function than the other. The final estimator is a function which ``wins" the most pairwise matches among other candidate functions. Another use of median-of-means estimators derives an estimator by considering a ``minimaximization" problem formed by increments of the objective function, where a median-of-means estimate is used in place of the expectation appearing in the population-level version of the problem~\cite{lecue2019learning, lecue2020robusta, chinot2020robust}. Finally, and more similar in spirit to the approach taken in our paper, we mention a method which involves modifying gradient descent by computing a gradient with respect to a median block on each iteration~\cite{lecue2020robustb}. The median block is defined as the block with the smallest empirical mean (with respect to the objective function value) on the current iteration. Excess risk bounds are then derived for a class of binary classification problems, where a certain fraction of the data consists of arbitrarily generated outliers and the remaining points are drawn i.i.d.\ from the uncontaminated model.

The SEVER algorithm~\cite{diakoconf2019} also operates via an appropriate modification of an iterative optimization procedure. It uses any ``approximate learner" algorithm, which can find an approximate critical point of an empirical risk minimization problem, as a subroutine (e.g., gradient descent, stochastic gradient descent, or Newton's method). On successive iterations, the SEVER algorithm filters out data points by applying the approximate learner to the currently remaining set of data points and then filtering out any points with outlying gradients computed at the parameter chosen by the approximate learner. Statistical error bounds are derived for the output of the SEVER algorithm on classification and regression problems, where data are drawn from a possibly heavy-tailed model and then corrupted by a small fraction of adversarial outliers.

Finally, our work is most closely related to the work of Prasad et al.~\cite{prasad2018}, which may be seen as a first-order version of our second-order algorithm. In that paper, the authors propose to perform parameter estimation by running a variant of gradient descent on the empirical risk objective, where successive gradients are computed by treating each gradient computation as an approximation of a population-level mean, and then applying a robust mean estimation procedure for multivariate data. As in our work, they use the mean estimation algorithm by Lai et al.~\cite{lai2016} for their multivariate estimation procedure in the case of Huber's $\epsilon$-contamination model. They also derive statistical error bounds for successive iterates, which hold with high probability. The main difference with our work is that we are able to derive faster rates of convergence due to the use of second-order algorithms, while enjoying the broad applicability of their approach.


\subsection{Outline} 

The remainder of our paper is organized as follows: In Section~\ref{SecBackground}, we discuss the setup of the problem we are aiming to solve. In Section~\ref{SecMain}, we introduce our novel robust Newton's method and present two theorems concerning its convergence. In Section~\ref{SecApps}, we discuss applications of our general theory to generalized linear models. In Section~\ref{SecSims}, we provide some illustrative numerical results and comparisons. In Section~\ref{SecCG}, we present a version of robust Newton's method based on the conjugate gradient method and provide some conjectures. Finally, we conclude our paper with a discussion of open directions in Section~\ref{SecDiscussion}.


\subsection{Notation}

For a matrix $A \in \real^{p \times p}$, we use $\|A\|_2$ to denote the spectral norm, $\lambda_{\min}(A)$ to denote the minimum eigenvalue, and $\tr(A)$ to denote the trace. We use $c, C, c_1, C_1, c_2, C_2, \dots$ to denote universal positive constants whose specific values may change from line to line. For functions $f(n)$ and $g(n)$, we write $f(n) = O(g(n))$ to mean that $f(n) \le Cg(n)$ for some constant $C > 0$, and also write $f(n) \precsim g(n)$ and $g(n) \succsim f(n)$. We write $f(n) \asymp g(n)$ when both inequalities hold simultaneously. We use $\widetilde{O}(n)$ to hide logarithmic factors. We use the abbreviation ``w.h.p." for ``with high probability," meaning with probability tending to 1 as the sample size $n$ tends to $\infty$.


\section{Background}
\label{SecBackground}


We consider a parametric estimation problem, wherein the data $\{z_i\}_{i=1}^n \subseteq \cZ$ sampled from a true distribution $P$ are to be fit to a model with parameter $\theta\in \Theta$. A loss function $\cL: \Theta\times \cZ\to \real$ measures the goodness of fit of the model.
The optimal parameter $\theta^*\in \Theta$ minimizes the population risk of the model, which is the expected loss incurred by the model over the true data distribution:
\begin{align}\label{eq: ERM}
    \theta^* = \argmin_{\theta\in \Theta} \cR(\theta) \defn \E_{z\sim P}[\cL(\theta, z)].
\end{align}

Given $n$ i.i.d.\ data points $\{z_i\}_{i=1}^n$ sampled from the true distribution $P$, the goal in empirical risk minimization is to estimate the parameter $\widehat{\theta}_n$ that  minimizes the empirical risk of the model, which is the average loss incurred by the model over the $n$ data points:
\begin{align}
    \widehat{\theta}_n = \argmin_{\theta\in \Theta} \widehat{\cR}_n(\theta) \defn \frac{1}{n}\sum_{i=1}^n \cL(\theta, z_i).
\end{align}

\subsection{Examples}

\paragraph{\textbf{Linear regression:}} In linear regression, data $z\in \cZ$ are of the form $z = (x,y) \in \real^p \times \real$, where the covariate $x$ and response $y$ are related via
\begin{align*}
    y = x^T\theta^* + w,
\end{align*}
where $w\in \real$ is noise that is sampled independently from $x$ and $y$. The loss function we use for this model is the squared loss function, 
\begin{align*}
    \cL(\theta, (x,y)) = \frac{1}{2}(y-x^T\theta)^2.
\end{align*}

\paragraph{\textbf{Generalized linear models:}} In a generalized linear model (GLM), data $z = (x,y) \in \real^p \times \real$ are sampled from a true distribution $P$ that satisfies the following relation on the conditional probability of $y$ given $x$:
\begin{align}\label{eq: model GLM}
    P(y|x) \propto \exp{\left(\frac{y x^T\theta^* - \Phi(x^T\theta^*)}{c(\sigma)} \right)},
\end{align}
where $c(\sigma)$ is the scale parameter and $\Phi: \real\to\real$ is a convex link function. The loss function we use for a GLM is the negative log-likelihood,
\begin{align}\label{eq:loss GLM}
    \cL(\theta, (x,y)) = - y x^T\theta + \Phi(x^T\theta).
\end{align}

\subsection{Optimization Algorithms}

In practice, we seek efficient algorithms for solving the ERM problem \eqref{eq: ERM}. A popular algorithm is gradient descent~\cite{bertsekas2015convex}. Given an initial guess for the parameter $\theta_0\in \Theta$ and a stepsize $\eta$, the gradient descent algorithm generates a sequence of iterates $\{\theta_t\}_{t=1}^\infty$, as follows:
\begin{align*}
    \theta_{t+1}=\theta_t - \eta \D \cR(\theta_t).
\end{align*}

Another popular algorithm is Newton's method~\cite{wright1999numerical, boyd2004}, whose iterates are given by the following update equation:
\begin{align}
\label{EqnNewtonUpdate}
    \theta_{t+1}=\theta_t-(\D^2 \mathcal{R}(\theta_t))^{-1}\D \mathcal{R}(\theta_t).
\end{align}

Whereas gradient descent uses only gradient information at the current iterate $\theta_t$, Newton's method uses both gradient and Hessian information at the current iterate. 



\subsection{Huber's $\epsilon$-Contamination Model}
\label{SecHubEps}

In Huber's $\epsilon$-contamination model, samples are drawn from a mixture distribution of the form
\begin{align}\label{eq: huber model}
    P_\epsilon = (1-\epsilon)P + \epsilon Q,
\end{align}
where $P$ is the true data distribution and $Q$ is an arbitrary noise distribution. The goal is to estimate a parameter $\theta^*\in \Theta$ corresponding to the uncontaminated component $P$, given $n$ i.i.d.\ samples drawn from the corrupted distribution $P_\epsilon$.

Huber's contamination model is a classical model studied in robust statistics~\cite{huber2011, rousseeuw2011robust, maronna2019robust}, with many exciting theoretical breakthroughs in estimation and inference. More recently, as robust statistics received renewed attention in the theoretical computer science community, additional questions were raised, particularly concerning computational tractability for optimal robust estimators in high dimensions. The contemporaneous work of Lai et al.~\cite{lai2016} and Diakonikolas et al.~\cite{diakosiam2019} studied computationally tractable mean estimation in multivariate Gaussian settings, where the former paper studied contamination with respect to Huber's model and the latter paper studied a stronger form of ``adversarial" contamination. The subroutine which we call Algorithm~\ref{alg:Huber estimator} comes from Lai et al.~\cite{lai2016}---we state it in the slightly adapted version studied in Prasad et al.~\cite{prasad2018}.

\begin{algorithm}[h!]
\centering
\caption{Huber Estimator}\label{alg:Huber estimator}
\begin{algorithmic}[1]
\Require Samples $S = \{s_i\}_{i = 1}^n$, Corruption level $\epsilon$, Dimension $p$, Failure probability $\delta$
\Function{HuberEstimator($S = \{s_i\}_{i = 1}^n$, $\epsilon$,  $p$, $\delta$)}{}
\State Set $\widetilde{S} = \text{{\sc HuberOutlierTruncation}}(S,\epsilon,p,\delta)$
\If{$p=1$} 
\State \Return $\text{mean}(\widetilde{S})$
\Else
\State Compute $\Sigma_{\widetilde{S}}$, the covariance matrix of $\widetilde{S}$

\State Compute $V$, the span of the top $p/2$ principal components of $\Sigma_{\widetilde{S}}$, and $W$, its complement

\State Set $S_1 := P_V (\widetilde{S})$, where $P_V$ is the projection operation onto $V$

\State Set  $\widehat{\mu}_V := \text{\sc{HuberEstimator}}(S_1,\epsilon,p/2,\delta)$

\State Set $\widehat{\mu}_W := \text{mean}(P_W \widetilde{S})$

\State Set $\widehat{\mu} \in \mathbb{R}^p$ such that $P_V(\widehat{\mu}) = \widehat{\mu}_V$ and $P_W(\widehat{\mu}) = \widehat{\mu}_W$

\Return $\widehat{\mu}$
\EndIf
\EndFunction

\State

\Function{HuberOutlierGradientTruncation($S,\epsilon,p,\delta$)}{}
\If{$p=1$} 
\State Let $[a,b]$ be the smallest interval containing an $\left(1 - \epsilon - C \sqrt{ \frac{\log(|S|/\delta)}{|S|}} \right)(1 - \epsilon)$ fraction of points
\State $\widetilde{S} \leftarrow S \cap [a,b]$
\State \Return $\widetilde{S}$
\Else
\State Let $[S]_i$ be the samples with the $i^{\text{th}}$ coordinates only,  $[S]_i = \{ \left\langle x , e_i \right\rangle | x \in S \}$
\For{$i=1$ to $p$} 
\State $a[i] = \text{{\sc HuberEstimator}}([S]_i,\epsilon,1,\delta/p)$
\EndFor
\State Let $B(r, a)$ be the ball of smallest radius centered at $a$  containing an \small{$(1 - \epsilon - C_p \left(\sqrt{\frac{p}{|S|} \log \left( \frac{|S|}{p \delta} \right)} \right) (1 - \epsilon)$} \normalsize { fraction of points in $S$
\State $\widetilde{S} \leftarrow S \cap B(r,a)$
\State \Return $\widetilde{S}$}
\EndIf
\EndFunction

\end{algorithmic}
\end{algorithm}

In terms of computational complexity, the initial screening step is coordinate-wise, hence $O(p)$. The dominant computation is to perform PCA (which has $O(p^3)$ complexity~\cite{pan1999complexity, johnstone2009sparse}) $\log_2(p)$ times. Thus, the overall runtime of Algorithm~\ref{alg:Huber estimator} is $\widetilde{O}(p^3)$.


\subsection{Heavy-Tailed Model}
\label{SecHT}

In the heavy-tailed model, we assume that data are drawn i.i.d.\ from a distribution with some number of finite moments. Note that the heavy-tailed model does not involve a contaminating distribution $Q$. However, the i.i.d.\ data may still appear to have ``outlier" points due to random sampling.

A popular approach for heavy-tailed mean estimation in the probably approximately correct (PAC) framework---obtaining high-probability deviation bounds which are as tight as possible under minimal distributional assumptions---is to use a  median-of-means (MOM) estimator. Roughly speaking, data are randomly partitioned into blocks, the mean of each block is computed, and the median of all of the block means is returned as the estimator. In multiple dimensions, different notions of medians exist, leading to different flavors of MOM estimators. For a more detailed overview, see the survey~\cite{lugosi2019mean} and the references cited therein. The MOM algorithm is summarized in Algorithm~\ref{alg:heavy tailed estimator}. In particular, we will employ a version of the algorithm from Minsker~\cite{minsker2015}, which combines the mean estimates using the geometric median, i.e., the point which minimizes the sum of $\ell_2$-distances to the block means.

\begin{algorithm}[h]
\centering
\caption{Heavy-Tailed  Estimator}\label{alg:heavy tailed estimator}
\begin{algorithmic}[1]
\Require Samples $S = \{s_i\}_{i = 1}^n$, Failure probability   $\delta$
\Function{HeavyTailedEstimator($S = \{s_i\}_{i = 1}^n$, $\delta$)}{}
\State Set $b = 1 + \lfloor 3.5\log{1/\delta}\rfloor$, the number of buckets

\State Partition $S$ into $b$ blocks $B_1, \dots, B_b$, each of size $\lfloor n/b \rfloor$

\For{$i = 1, \dots, b$}
 \State $\widehat{\mu}_i =  \frac{1}{|B_i|}\displaystyle\sum_{s \in B_i}s$
\EndFor
\State Set $\widehat{\mu} = \displaystyle \argmin_{\mu} \sum_{i = 1}^b\|\mu - \widehat{\mu}_i\|_2$

\Return $\widehat{\mu}$
\EndFunction
\end{algorithmic}
\end{algorithm}

Since the runtime of the geometric median computation on $n$ data points in $p$ dimensions is $\widetilde{O}(np)$~\cite{cohen2016geometric}, the runtime of Algorithm~\ref{alg:heavy tailed estimator} is $\widetilde{O}(n + bp) = \widetilde{O}(n+p)$.


\section{Robust Newton's Method}
\label{SecMain}


We now present our variant of robust Newton's method. At each iterate, we will use gradient and Hessian estimates $(g(\theta), H(\theta))$ in place of $(\nabla \cR(\theta), \nabla^2 \cR(\theta))$ in the update equation~\eqref{EqnNewtonUpdate}. We assume that these estimates satisfy the conditions described in the following definitions:

\begin{definition}\label{defn: robust gradient estimator}(Prasad et al.~\cite{prasad2018})
A function $g(\theta)$ is a \emph{robust gradient estimator} for a data set $S = \{z_i\}_{i=1}^n$ if for functions $\alpha_g$ and $\beta_g$, with probability at least $1-\delta$, at any fixed $\theta \in \Theta$, the estimator satisfies the following inequality: 
\begin{equation}
\label{eqn:grad_estimator}
\|g(\theta) - \nabla \cR(\theta)\|_2 \leq \alpha_g(n, \delta)\|\theta - \theta^*\|_2 + \beta_g(n, \delta).
\end{equation}
\end{definition}

\begin{definition}\label{defn: robust Hessian estimator}
A function $H(\theta)$ is a \emph{robust Hessian estimator} for a data set $S = \{z_i\}_{i=1}^n$ if for functions $\alpha_h$ and $\beta_h$, with probability at least $1-\delta$, at any fixed $\theta \in \Theta$, the estimator satisfies the following inequality: 
\begin{equation}
\label{eqn:Hess_estimator}
\|H(\theta) - \nabla^2 \cR(\theta)\|_2 \leq \alpha_h(n, \delta)\|\theta - \theta^*\|_2 + \beta_h(n, \delta).
\end{equation}
\end{definition}

Successive iterates then take the form
\begin{equation*}
\theta_{t+1} = \theta_t - \alpha_t H(\theta_t)^{-1} g(\theta_t),
\end{equation*}
where $\alpha_t$ is chosen via a version of backtracking linesearch~\cite{boyd2004}. The exit condition of backtracking linesearch differs from its non-robust version in that function evaluations are replaced by robust estimates (cf.\ Lemmas~\ref{lem: rob est Hub} and~\ref{lem: rob est heavy} below) and an extra tolerance parameter $\zeta$ is included. The full algorithm is provided in Algorithm~\ref{alg:robust Newton}.

\begin{remark}
The statements of Definitions~\ref{defn: robust gradient estimator} and~\ref{defn: robust Hessian estimator} are written quite generally; in Section~\ref{SecRobustEst} below, we provide algorithms for obtaining robust gradient and Hessian estimators under both of our contamination models which can help elucidate the form of the bounds. See also Propositions~\ref{PropErrsHuber} and~\ref{PropErrsHeavy} in \ref{AppHuber} and~\ref{AppHeavy}, which provide explicit values of the parameters $(\alpha_g, \beta_g, \alpha_h, \beta_h)$ that are suitable for GLMs.
\end{remark}



\begin{algorithm}[h]
\centering
\caption{Robust Newton's Method}\label{alg:robust Newton}
\begin{algorithmic}[1]
\Require Data samples $S = \{z_i\}_{i = 1}^n$, Number of iterations $T$, Initial guess $\theta_0\in \Theta$, Backtracking linesearch parameters $\kappa_1 \in (0, 0.5), \kappa_2 \in (0,1)$, and $\zeta$
\Function{RobustNewton($S, \theta_0, , \kappa_1, \kappa_2, \zeta$)}{}

\For{$t=0$ to $T-1$}

\State Compute losses  $ \{ \mathcal{L}(\theta_t,z_i) \}_{i=1}^n$  and gradients $ \{ \nabla \mathcal{L}(\theta_t,z_i) \}_{i=1}^n$
\State Compute gradient estimate $g(\theta_t) = \text{{\sc RobustGradientEstimate}}(S, \theta_t)$
\State Compute Hessian estimate $H(\theta_t) = \text{{\sc RobustHessianEstimate}}(S, \theta_t)$
\State Compute Newton step $\Delta \theta_{nt} = -H(\theta_t)^{-1} g(\theta_t)$
\State Compute stepsize $\alpha = \text{{\sc BacktrackingLineSearch}}(S, \theta_t, \Delta \theta_{nt}, g(\theta_t), \kappa_1, \kappa_2, \zeta)$
\State Update $\theta_{t+1} = \theta_t +  \alpha \Delta \theta_{nt}$

\EndFor

\Return $\theta_T$
\EndFunction

\State

\Function{BacktrackingLineSearch($S, \theta, \Delta \theta_{nt}, g(\theta), \kappa_1, \kappa_2, \zeta$)}{}

\State Set $\alpha = 1$
\While {$\text{{\sc RobustEstimate}} ( \{ \mathcal{L}(\theta +\alpha \Delta \theta_{nt}, z_i) \}_{i=1}^n) >  \text{{\sc RobustEstimate}} ( \{ \mathcal{L}(\theta, z_i) \}_{i=1}^n) + \kappa_1 \alpha g(\theta) \Delta\theta_{nt} + \zeta$}
\State Update $\alpha = \kappa_2 \alpha$
\EndWhile

\Return $\alpha$

\EndFunction

\end{algorithmic}
\end{algorithm}

\subsection{General Analysis for Robust Newton's Method}
\label{SecGeneral}

For the results of this section, we assume that $f(\theta) := \cR(\theta)$ is twice-differentiable and satisfies the Lipschitz condition $\| \D^2 f(\theta_1) -\D^2 f(\theta_2)\|_2 \leq L \|\theta_1-\theta_2\|_2$, for all $\theta_1, \theta_2$. We also assume that $f$ satisfies the strong convexity and smoothness conditions $m I \preceq \D^2 f(\theta) \preceq M I$, for all $\theta$ close enough to the initialization $\theta_0$. (We will verify these conditions for GLMs in Propositions~\ref{prop: L, m, M} and~\ref{PropConvexity} below.) Finally, we will assume that at each iterate, the gradient and Hessian estimates $g(\theta)$ and $H(\theta)$ satisfy inequalities~\eqref{eqn:grad_estimator} and~\eqref{eqn:Hess_estimator}, respectively. As demonstrated in Theorems~\ref{ThmGLMHuber} and~\ref{ThmGLMHeavy} later, the last condition can typically be justified w.h.p.\ via a union bound. Observe that in this setting, the unique global minimum of $f$ is the true parameter $\theta^*$.


The first result shows that if $\|\nabla f(\theta_0)\|_2$ is sufficiently small, the backtracking linesearch procedure will always choose stepsize 1. (This is known as the ``pure Newton" phase.) Furthermore, successive iterates converge at a geometric rate to a small ball around $\theta^*$. Recall that the parameters $(\kappa_1, \kappa_2)$ of backtracking linesearch are defined as in Algorithm~\ref{alg:robust Newton}.

\begin{theorem}
\label{ThmPure}
Suppose $\|\nabla f(\theta_0)\|_2 < \eta$, where
\begin{equation}
\label{EqnEta}
\eta \defn \frac{m^2}{8L} \cdot \min\left\{3(1-2\kappa_1), 2 \right\}.
\end{equation}
Suppose the gradient and Hessian errors satisfy the bounds
\begin{equation}
\label{EqnGammaBd}
\gamma_g := \frac{2\eta \alpha_g}{m} + \beta_g \le \eta, \quad \text{and} \quad \gamma_h := \frac{2\eta \alpha_h}{m} + \beta_h \le \frac{m}{2}.
\end{equation}
Also suppose the robust estimates satisfy
\begin{equation}
\label{EqnMedianErr}
\left|\text{{\sc RobustEstimate}} ( \{ \mathcal{L}(\theta_t +\alpha \Delta \theta_t, z_i) \}_{i=1}^n) - f(\theta_t +\alpha \Delta \theta_t)\right| \le \frac{\zeta}{4},
\end{equation}
for each evaluation of backtracking linesearch, where we set the linesearch parameter to be 
\begin{align}\label{eqn: backtracking zeta}
    \zeta \ge \frac{8\gamma_g \eta}{m} + \frac{16 \gamma_h \eta^2}{m^2}.
\end{align}
Then backtracking linesearch chooses unit steps on all successive iterates, and we have $\|\nabla f(\theta_t)\|_2 < \eta$ and
\begin{equation}
\label{EqnContract}
\|\theta_t - \theta^*\|_2 \le \frac{m}{L} \left(\frac{1}{2}\right)^{2^t} + \frac{6c_2}{m}
\end{equation}
for all $t \ge 1$, where
\begin{equation*}
c_2 = \eta \left(\frac{4\gamma_g L}{m^2} + \frac{2\gamma_h}{m}\right) + \frac{2L\gamma_g^2}{m^2} + \gamma_g + \frac{2\gamma_g \gamma_h}{m},
\end{equation*}
and $(\gamma_g, \gamma_h)$ are small enough so that 
\begin{align}\label{eq: thmpure asmp c2}
    c_2 \le \min\left\{\frac{\eta}{2}, \frac{m^2}{24L}\right\}.
\end{align}
\end{theorem}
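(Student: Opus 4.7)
The plan is to adapt the classical analysis of Newton's method with backtracking linesearch (e.g., \cite{boyd2004}, Section 9.5.3) to track the additional error introduced by the robust gradient, Hessian, and loss estimates. I would proceed by induction on $t \ge 0$: the base case is the hypothesis $\|\nabla f(\theta_0)\|_2 < \eta$, and at the inductive step I establish simultaneously that (a) backtracking linesearch accepts $\alpha_t = 1$, (b) $\|\nabla f(\theta_{t+1})\|_2 < \eta$, and (c) the contraction~\eqref{EqnContract} holds. Assumption~\eqref{eq: thmpure asmp c2} is exactly what is needed to close (b).

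For part (a), I start from $\Delta \theta_{nt} = -H(\theta_t)^{-1} g(\theta_t)$ and use the Hessian error bound in~\eqref{EqnGammaBd} together with the inductive hypothesis to conclude $H(\theta_t) \succeq (m/2) I$, whence $\|\Delta \theta_{nt}\|_2 \le 2 \|g(\theta_t)\|_2 / m \le 4\eta/m$. A second-order Taylor expansion with Lipschitz remainder yields
\[
f(\theta_t + \Delta \theta_{nt}) \le f(\theta_t) + \nabla f(\theta_t)^T \Delta \theta_{nt} + \tfrac{1}{2} \Delta \theta_{nt}^T \nabla^2 f(\theta_t) \Delta \theta_{nt} + \tfrac{L}{6} \|\Delta \theta_{nt}\|_2^3.
\]
Replacing $\nabla f(\theta_t)$ by $g(\theta_t)$ and $\nabla^2 f(\theta_t)$ by $H(\theta_t)$ introduces errors of order $\gamma_g \|\Delta \theta_{nt}\|_2$ and $\gamma_h \|\Delta \theta_{nt}\|_2^2$, both of which are controlled by the choice of $\zeta$ in~\eqref{eqn: backtracking zeta}. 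Using the identity $g(\theta_t)^T \Delta \theta_{nt} = -g(\theta_t)^T H(\theta_t)^{-1} g(\theta_t)$, the condition $\kappa_1 < 1/2$, and the upper bound $\eta \le \tfrac{3 m^2 (1 - 2\kappa_1)}{8L}$, I expect to obtain $f(\theta_t + \Delta \theta_{nt}) \le f(\theta_t) + \kappa_1 g(\theta_t)^T \Delta \theta_{nt} + \zeta/2$. Two applications of~\eqref{EqnMedianErr} then convert this into the exit condition of the linesearch.

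For parts (b) and (c), the central identity, obtained by applying the fundamental theorem of calculus to $\nabla f$ along $[\theta_t, \theta_{t+1}]$ and substituting $\Delta \theta_{nt} = -H(\theta_t)^{-1} g(\theta_t)$, is
\[
\nabla f(\theta_{t+1}) = \bigl[\nabla f(\theta_t) - g(\theta_t)\bigr] + \bigl[I - \nabla^2 f(\theta_t) H(\theta_t)^{-1}\bigr] g(\theta_t) + \int_0^1 \bigl(\nabla^2 f(\theta_t + s \Delta \theta_{nt}) - \nabla^2 f(\theta_t)\bigr) \Delta \theta_{nt} \, ds.
\]
The first bracket is bounded by $\gamma_g$ via~\eqref{eqn:grad_estimator}; the second, by $(\gamma_h / m) \|g(\theta_t)\|_2$ via~\eqref{eqn:Hess_estimator} and $H(\theta_t) \succeq (m/2) I$; and the integral is at most $\tfrac{L}{2} \|\Delta \theta_{nt}\|_2^2 \le \tfrac{2L}{m^2} \|\nabla f(\theta_t)\|_2^2$ plus lower-order corrections in $\gamma_g$. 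Assembling yields a recursion $\|\nabla f(\theta_{t+1})\|_2 \le \tfrac{2L}{m^2} \|\nabla f(\theta_t)\|_2^2 + c_2$, with $c_2$ as defined in the statement. Rescaling by $a_t := \tfrac{L}{2m^2} \|\nabla f(\theta_t)\|_2$ gives $a_{t+1} \le a_t^2 + \tfrac{c_2 L}{2 m^2}$; starting from $a_0 \le 1/2$ (ensured by $\eta \le m^2/(4L)$) and using $c_2 L / m^2 \le 1/12$ from~\eqref{eq: thmpure asmp c2}, a standard induction produces $a_t \le (1/2)^{2^t} + O(c_2 / m^2)$. Strong convexity $\|\theta_t - \theta^*\|_2 \le \|\nabla f(\theta_t)\|_2 / m$ then delivers~\eqref{EqnContract}.

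The main obstacle is the careful bookkeeping of three distinct error sources (gradient, Hessian, and loss) through the Taylor expansions. The tolerance $\zeta$ must be large enough to absorb the perturbations of both $\nabla f$ and $\nabla^2 f$ in the quadratic model used by linesearch, yet the constants in~\eqref{EqnGammaBd} must be small enough that the additive remainder $c_2$ in the Newton recursion stays below $\eta/2$; otherwise the induction for (b) fails and the iterates could leave the pure Newton basin. The specific algebraic form of $\eta$, $\gamma_g$, $\gamma_h$, and $\zeta$ in the statement encodes the tightest way these competing constraints can be simultaneously satisfied.
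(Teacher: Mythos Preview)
Your proposal is correct and follows essentially the same route as the paper's proof: the paper also inducts on $t$, establishes unit-step acceptance via the cubic Taylor expansion and the choice of $\zeta$, derives the recursion $\|\nabla f(\theta_{t+1})\|_2 \le \tfrac{2L}{m^2}\|\nabla f(\theta_t)\|_2^2 + c_2$ from exactly the same integral remainder and gradient/Hessian error splitting you wrote down, and then converts to $\|\theta_t-\theta^*\|_2$ via strong convexity. One minor slip to fix when you write it up: the rescaling that turns the recursion into $a_{t+1}\le a_t^2+\text{const}$ is $a_t=\tfrac{2L}{m^2}\|\nabla f(\theta_t)\|_2$ (the paper's $c_1$), not $\tfrac{L}{2m^2}$.
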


The proof of Theorem~\ref{ThmPure} is found in \ref{AppThmPure}.

Next, we show that after a finite number of steps, the iterates will indeed satisfy $\|\nabla f(\theta_t)\|_2 < \eta$, for an appropriate $\eta$. We can then apply Theorem~\ref{ThmPure} to the first iterate satisfying this condition, relabeling it as $\theta_0$, to obtain estimation error bounds on the overall trajectory of robust Newton's method. The proof of the following result is provided in \ref{AppThmDamped}.

\begin{theorem}
\label{ThmDamped}
Suppose the parameters $\eta$ and $\zeta$ are as defined in Theorem~\ref{ThmPure}. Define
\begin{equation*}
\gamma := \kappa_1 \kappa_2 \frac{m}{M} \left(\frac{1}{2} - \kappa_1\right) \frac{\eta^2}{4\sqrt{2M}}.
\end{equation*}
Suppose $(\gamma_g, \gamma_h)$ are chosen small enough such that $\zeta \le \frac{\gamma}{2}$ and conditions~\eqref{EqnGammaBd} and~\eqref{EqnMedianErr} are satisfied. Also suppose
\begin{equation}\label{eq: thmdamped asmp1}
\frac{2\alpha_g}{m} \sqrt{2M\left(f(\theta_0) - f(\theta^*)\right)} + \beta_g \le \min\left\{\frac{\eta}{2},  \sqrt{\frac{m}{2}} \cdot \frac{1}{2} \sqrt{\frac{\eta^2}{4\sqrt{2M}}}\right\},
\end{equation}
and
\begin{equation}\label{eq: thmdamped asmp2}
\frac{2\alpha_h}{m} \sqrt{2M\left(f(\theta_0) - f(\theta^*)\right)} + \beta_h \le M.
\end{equation}
If $\|\nabla f(\theta_t)\|_2 \ge \eta$ for an iterate $t \ge 0$, then $f(\theta_t) \le f(\theta_0)$ and $f(\theta_{t+1}) - f(\theta_t) < - \frac{\gamma}{2}$.
\end{theorem}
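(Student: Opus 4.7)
The plan is to proceed by strong induction on $t$, establishing for every iterate in the damped regime both $f(\theta_t) \le f(\theta_0)$ and, whenever $\|\nabla f(\theta_t)\|_2 \ge \eta$, the strict decrease $f(\theta_{t+1}) - f(\theta_t) < -\gamma/2$. The base case $t = 0$ is immediate, and at the inductive step the per-step decrease propagates $f(\theta_{t+1}) \le f(\theta_t) \le f(\theta_0)$ to the next round, so both statements follow together.

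The first technical input is to translate $f(\theta_t) \le f(\theta_0)$ into control of the gradient and Hessian errors at $\theta_t$. By $M$-smoothness and $\nabla f(\theta^*) = 0$, one obtains $\|\nabla f(\theta_t)\|_2 \le \sqrt{2M(f(\theta_0) - f(\theta^*))}$, and $m$-strong convexity yields $\|\theta_t - \theta^*\|_2 \le \tfrac{2}{m}\|\nabla f(\theta_t)\|_2$. Inserting this into Definitions~\ref{defn: robust gradient estimator} and~\ref{defn: robust Hessian estimator} and invoking assumptions~\eqref{eq: thmdamped asmp1} and~\eqref{eq: thmdamped asmp2} produces explicit bounds on $\|g(\theta_t) - \nabla f(\theta_t)\|_2$ and $\|H(\theta_t) - \nabla^2 f(\theta_t)\|_2$. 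In particular the Hessian error is at most $M$, which combined with~\eqref{EqnGammaBd} and $mI \preceq \nabla^2 f(\theta_t) \preceq MI$ gives $\tfrac{m}{2}I \preceq H(\theta_t) \preceq 2MI$, so the Newton direction $\Delta\theta_{nt} = -H(\theta_t)^{-1} g(\theta_t)$ is well-defined.

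Next I would run a perturbed classical damped-Newton stepsize analysis. Writing $\tilde\lambda^2 := g(\theta_t)^T H(\theta_t)^{-1} g(\theta_t)$ for the robust Newton decrement, $M$-smoothness gives
\begin{equation*}
f(\theta_t + \alpha\Delta\theta_{nt}) - f(\theta_t) \le \alpha \nabla f(\theta_t)^T \Delta\theta_{nt} + \frac{M\alpha^2}{2}\|\Delta\theta_{nt}\|_2^2,
\end{equation*}
and the decomposition $\nabla f(\theta_t) = g(\theta_t) - e_g$ together with $\|\Delta\theta_{nt}\|_2^2 \le 2\tilde\lambda^2/m$ lets one verify that the target $-\kappa_1\alpha\tilde\lambda^2$ dominates the upper bound for all $\alpha$ up to an explicit threshold $\alpha^*$ determined by $m$, $M$, and $\tfrac{1}{2}-\kappa_1$. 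The cross term contributed by $e_g$ is controlled by $|e_g^T H(\theta_t)^{-1} g(\theta_t)| \le \sqrt{2/m}\,\|e_g\|_2\,\tilde\lambda$ and is absorbed thanks to the second branch of the minimum in~\eqref{eq: thmdamped asmp1}. Since by~\eqref{EqnMedianErr} the robust linesearch exit condition differs from the plain Armijo condition by at most $\zeta/2$, the same threshold $\alpha^*$ certifies termination, so backtracking accepts some $\alpha \ge \kappa_2 \alpha^*$.

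The last step combines these pieces. At linesearch exit,
\begin{equation*}
f(\theta_{t+1}) - f(\theta_t) \le -\kappa_1 \alpha \tilde\lambda^2 + \tfrac{3}{2}\zeta,
\end{equation*}
and the hypothesis $\|\nabla f(\theta_t)\|_2 \ge \eta$ together with the gradient error bound $\|e_g\|_2 \le \eta/2$ from the second step gives $\|g(\theta_t)\|_2 \ge \eta/2$ and hence $\tilde\lambda^2 \ge \eta^2/(8M)$. Feeding $\alpha \ge \kappa_2 \alpha^*$, this lower bound on $\tilde\lambda^2$, and $\zeta \le \gamma/2$ delivers $f(\theta_{t+1}) - f(\theta_t) < -\gamma/2$, closing the induction. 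The main obstacle will be the constant-matching: the exact factor $\eta^2/(4\sqrt{2M})$ in the definition of $\gamma$ should emerge from the product of $\alpha^*$ and the $\eta^2/(8M)$ lower bound on $\tilde\lambda^2$, and the smallness conditions~\eqref{eq: thmdamped asmp1} and~\eqref{eq: thmdamped asmp2} must be chosen tightly enough that the nominal Armijo progress $\kappa_1\alpha\tilde\lambda^2$ strictly exceeds $\gamma/2 + \tfrac{3}{2}\zeta$---this is precisely what the second branch of the minimum in~\eqref{eq: thmdamped asmp1} and the inequality $\zeta \le \gamma/2$ are engineered to guarantee.
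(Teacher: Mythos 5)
Your overall route is the same as the paper's: induct to propagate $f(\theta_s)\le f(\theta_0)$, convert this via $\|\nabla f(\theta_s)\|_2\le\sqrt{2M(f(\theta_0)-f(\theta^*))}$ and strong convexity into bounds on the gradient/Hessian errors, run a perturbed Armijo analysis showing backtracking accepts some $\alpha\ge \kappa_2\frac{m}{M}(\frac12-\kappa_1)$, lower-bound the robust Newton decrement using $\|g(\theta_s)\|_2\ge\eta/2$, and combine with $\zeta\le\gamma/2$. However, there are two concrete gaps. First, your induction only propagates $f(\theta_{s+1})\le f(\theta_s)\le f(\theta_0)$ at steps where $\|\nabla f(\theta_s)\|_2\ge\eta$; it is silent about the possibility that some intermediate iterate $s<t$ has $\|\nabla f(\theta_s)\|_2<\eta$, in which case the per-step decrease you rely on is unavailable and $f(\theta_t)\le f(\theta_0)$ is not established. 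The paper closes exactly this hole with a short contradiction argument: by Theorem~\ref{ThmPure}, once the gradient norm falls below $\eta$ it remains below $\eta$ on all later iterates, so the hypothesis $\|\nabla f(\theta_t)\|_2\ge\eta$ forces $\|\nabla f(\theta_s)\|_2\ge\eta$ for every $s\le t$. Without this step (or a substitute) your induction is incomplete.

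Second, the constant-matching you flag as an ``obstacle'' is not a cosmetic issue: with your (dimensionally natural) bounds $H(\theta_t)\preceq 2MI$ and $\tilde\lambda^2\ge \eta^2/(8M)$, the guaranteed decrease is $\kappa_1\kappa_2\frac{m}{M}\left(\frac12-\kappa_1\right)\frac{\eta^2}{8M}$, which does not equal the stated $\gamma$ (and is strictly smaller whenever $M>1/2$), and the absorption of the gradient-error cross term via the second branch of~\eqref{eq: thmdamped asmp1} is calibrated to the threshold $\tilde\lambda\ge\sqrt{\eta^2/(4\sqrt{2M})}$, which your $\eta^2/(8M)$ bound does not supply in general. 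The paper instead uses the decrement bound $\tilde\lambda(\theta_s)^2\ge \|g(\theta_s)\|_2^2/\sqrt{M+\gamma_h'}\ge \eta^2/(4\sqrt{2M})$ (as written in~\eqref{EqnNewtonDec2}), which is precisely what its definition of $\gamma$ and assumption~\eqref{eq: thmdamped asmp1} are tuned to, so to prove the theorem with the stated $\gamma$ you would need to either adopt that bound or rework the constants throughout. Relatedly, your claim that the Hessian error being at most $M$ together with~\eqref{EqnGammaBd} yields $\frac{m}{2}I\preceq H(\theta_t)$ is a non sequitur in the damped phase: \eqref{EqnGammaBd} controls $\gamma_h$ only within the radius $2\eta/m$ of $\theta^*$, while~\eqref{eq: thmdamped asmp2} only gives an error of size $M$, which cannot yield a positive lower spectral bound by itself (the paper leans on the same bound $\|\Delta\theta_{nt}\|_2^2\le \frac{2}{m}\tilde\lambda^2$ via~\eqref{EqnLambdaLower}, so this is inherited rather than a divergence, but your stated justification for it does not hold).
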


The preceding theorem directly implies that after a finite number of steps (known as the ``damped Newton" phase), all successive iterates of the algorithm satisfy $\|\nabla f(\theta_t)\|_2 < \eta$.
Indeed, Theorem~\ref{ThmDamped} guarantees that $f(\theta_{t+1}) - f(\theta_t) < -\frac{\gamma}{2}$ whenever $\|\nabla f(\theta_t)\|_2 \ge \eta$, where $\gamma > 0$. Since $\theta^* \in \arg\min_\theta f(\theta)$, we clearly could not have $\|f(\theta_t)\|_2 \ge \eta$ for all $0 \le t \le T$, where $T = \lceil \frac{f(\theta_0) - f(\theta^*)}{\gamma/2}\rceil$, or else
\begin{equation*}
f(\theta_T) - f(\theta^*) = \left(f(\theta_0) - f(\theta^*)\right) + \sum_{t=0}^{T-1} \left(f(\theta_{t+1}) - f(\theta_t)\right) < \left(f(\theta_0) - f(\theta^*)\right) - \frac{T\gamma}{2} < 0,
\end{equation*}
contradicting the minimality of $\theta^*$.


\subsection{Robust Estimation of Gradients and Hessians}
\label{SecRobustEst}

In this subsection, we explain how robust estimators for gradients and Hessians can be obtained under two models of contamination, namely the Huber $\epsilon$-contamination model and the heavy-tailed model. 

\subsubsection{Robust Gradient Estimation}

For the $\epsilon$-contamination model, we obtain a robust gradient estimate by applying Algorithm~\ref{alg:Huber estimator} to the gradients computed on each of the $n$ sampled data points. Similarly, for the heavy-tailed model, we use Algorithm~\ref{alg:heavy tailed estimator} to obtain a robust gradient estimate. For completeness, we summarize this procedure in Algorithm~\ref{alg: robust gradient estimator}.

\begin{algorithm}[H]
\centering
\caption{Robust Gradient Estimator}\label{alg: robust gradient estimator}
\begin{algorithmic}[1]
\Require Samples $S = \{z_i\}_{i = 1}^n$, Parameter $\theta$, Contamination type $\text{{\sc Type} }$
\Require (If $\text{{\sc Type}} = \textnormal{Huber}$) Corruption Level $\epsilon$, Dimension $p$, Failure probability $\delta$
\Require (If $\text{{\sc Type}} = \textnormal{Heavy-tail}$) Failure probability $\delta$
\Function{RobustGradientEstimator($S, \theta, \text{{\sc Type}}, \epsilon, p, \delta $)}{}

\State Compute $\{\D\cL(\theta, z_i)\}_{i=1}^n$, the gradient of the loss at each data point in $S$

\If{$\text{{\sc Type}} = \textnormal{Huber}$}

\Return $\text{{\sc HuberEstimator}}( \{\D\cL(\theta, z_i)\}_{i=1}^n,\epsilon,p,\delta)$

\EndIf

\If{$\text{{\sc Type}} = \textnormal{Heavy-tail}$}

\Return $\text{{\sc HeavyTailedEstimator}}( \{\D\cL(\theta, z_i)\}_{i=1}^n,\delta)$

\EndIf

\EndFunction
\end{algorithmic}
\end{algorithm}

The following lemmas, borrowed from Prasad et al.~\cite{prasad2018}, show that Algorithm~\ref{alg: robust gradient estimator} returns a robust gradient estimator that satisfies Definition~\ref{defn: robust gradient estimator}.

\begin{lemma*}
[Lemma 1 of Prasad et al.~\cite{prasad2018}]\label{lem: grad est huber}
Let  $\{z_i\}_{i=1}^n$ be $n$ i.i.d.\ samples drawn from a Huber $\epsilon$-contaminated distribution~\eqref{eq: huber model}. Let the true distribution of gradients $\D\cL(\theta, z)$, with $z$ drawn from $P$, have bounded fourth moments. Then Algorithm~\ref{alg: robust gradient estimator} with $S=\{z_i\}_{i=1}^n$, $\text{{\sc Type}} = \textnormal{Huber}$, and any $\theta\in \Theta$ returns a gradient estimate $g(\theta)$ that satisfies
\begin{align}\label{eq: lem grad est huber}
    \| g(\theta) - \E[\D \cL(\theta,z)] \|_2 \leq C_1 (\se + \gamma(n,p,\delta,\epsilon)) \sqrt{ \|\Cov(\D \cL(\theta,z)) \|_2 \log p},
\end{align}
with probability at least $1-\delta$, where $C_1 > 0$ is a constant and $\gamma$ is given by
\begin{align}\label{eq: gamma}
    \gamma(n,p,\delta,\epsilon) = \left( \frac{p \log(p) \log(n/(p\delta))}{n}\right)^{3/8} + \left(\frac{\epsilon p^2 \log(p) \log(p \log(p)/\delta)}{n}\right)^{1/4}.
\end{align}
\end{lemma*}

\begin{lemma*}
[Lemma 2 of Prasad et al.~\cite{prasad2018}]\label{lem: grad est heavy-tail}
Let  $\{z_i\}_{i=1}^n$ be $n$ i.i.d.\ samples drawn from a heavy-tailed distribution $P$ such that the true distribution of gradients $\D\cL(\theta, z)$  has bounded second moments. Then Algorithm~\ref{alg: robust gradient estimator} with $S=\{z_i\}_{i=1}^n$, $\text{{\sc Type}} = \textnormal{Heavy-tail}$, and any $\theta\in \Theta$ returns a gradient estimate $g(\theta)$ that satisfies
\begin{align}\label{eq: lem grad est heavy tail}
    \| g(\theta) - \E[\D \cL(\theta,z)] \|_2 \leq 11\sqrt{ \frac{\tr(\Cov(\D \cL(\theta,z)))\log(1.4/\delta)}{n}},
\end{align}
with probability at least $1-\delta$.
\end{lemma*}


\subsubsection{Robust Hessian Estimation: The Vectorizing Approach }

The procedure for obtaining a robust Hessian, summarized in Algorithm~\ref{alg: robust Hessian estimator}, is similar to that of Algorithm~\ref{alg: robust gradient estimator}, except that the appropriate multivariate estimation procedure is applied to a vectorized version of the Hessian matrix (where we use $\flatten(A)$ to denote a vectorized version of the matrix $A$, and use $\unflatten()$ to denote the inverse function).

\begin{algorithm}[H]
\centering
\caption{Robust Hessian Estimator}\label{alg: robust Hessian estimator}
\begin{algorithmic}[1]
\Require Samples $S = \{z_i\}_{i = 1}^n$, Parameter $\theta$, Contamination type $\text{{\sc Type} }$
\Require (If $\text{{\sc Type}} = \textnormal{Huber}$) Corruption Level $\epsilon$, Dimension $p$, Failure probability $\delta$
\Require (If $\text{{\sc Type}} = \textnormal{Heavy-tail}$) Failure probability $\delta$
\Function{RobustHessianEstimator($S, \theta, \text{{\sc Type}}, \epsilon, p, \delta $)}{}

\State Compute $\{\D^2\cL(\theta, z_i)\}_{i=1}^n$, the Hessian of the loss at each data point in $S$

\If{$\text{{\sc Type}} = \textnormal{Huber}$}

\Return $\unflatten(\text{{\sc HuberEstimator}}( \{\flatten(\D^2\cL(\theta, z_i))\}_{i=1}^n,\epsilon,p,\delta))$

\EndIf

\If{$\text{{\sc Type}} = \textnormal{Heavy-tail}$}

\Return $\unflatten(\text{{\sc HeavyTailedEstimator}}( \{\flatten(\D^2\cL(\theta, z_i))\}_{i=1}^n,\delta))$

\EndIf

\EndFunction
\end{algorithmic}
\end{algorithm}


The next two lemmas follow immediately from the arguments used to derive Lemmas~\ref{lem: grad est huber} and~\ref{lem: grad est heavy-tail}:

\begin{lemma*}
\label{lem: Hessian est huber}
Let  $\{z_i\}_{i=1}^n$ be $n$ i.i.d.\ samples drawn from a Huber $\epsilon$-contaminated distribution~\eqref{eq: huber model}.
Suppose $\Cov(\flatten(\D^2 \cL(\theta,z)))$ is finite and  $\flatten(\D^2\cL(\theta, z)))$ has bounded fourth moments.
Then Algorithm~\ref{alg: robust gradient estimator} with $S=\{z_i\}_{i=1}^n$, $\text{{\sc Type}} = \textnormal{Huber}$,  and any $\theta\in \Theta$ returns a Hessian estimate $H(\theta)$ that satisfies
\begin{align}
    \| H(\theta) - \E[\D^2 \cL(\theta,z)]  \|_2 \leq C_2 (\se + \gamma(n,p,\delta,\epsilon))\sqrt{\|\Cov(\flatten(\D^2\cL(\theta, z))) \|_2 \log p},
\end{align}
with probability at least $1-\delta$, where $C_2 > 0$ is a constant and $\gamma$ is given by equation~\eqref{eq: gamma}.
\end{lemma*}


\begin{lemma*}
\label{lem: Hessian est heavy-tail}
Let  $\{z_i\}_{i=1}^n$ be $n$ i.i.d.\ samples drawn from a heavy-tailed distribution $P$.
Suppose $\Cov(\flatten(\D^2 \cL(\theta,z)))$ is finite.
Then Algorithm~\ref{alg: robust gradient estimator} with $S=\{z_i\}_{i=1}^n$, $\text{{\sc Type}} = \textnormal{Heavy-tail}$, and any $\theta\in \Theta$ returns a Hessian estimate $H(\theta)$ that satisfies
\begin{align}
    \| H(\theta) - \E[\D^2 \cL(\theta,z)] \|_2 \leq  C_3\sqrt{ \frac{\tr(\Cov(\flatten(\D^2 \cL(\theta,z))))\log(1.4/\delta)}{n}},
\end{align}
with probability at least $1-\delta$, where $C_3>0$ is a constant.
\end{lemma*}




\section{Application to GLMs}
\label{SecApps}

In this section, we apply the robust Newton method to parametric estimation in GLMs. We consider the Huber $\epsilon$-contamination model in Section~\ref{sec: app huber}, and we consider the heavy-tailed contamination model in Section~\ref{sec: app heavy-tail}.

Throughout this section, we will assume that the uncontaminated model is a GLM of the form~\eqref{eq: model GLM}.
Consider the loss function in equation~\eqref{eq:loss GLM}.
We assume that the link function $\Phi$ of the GLM satisfies the following bounds:
\begin{align}
    \E\left[ |\Phi'(x_i^T\theta) - \Phi'(x_i^T\theta^*)|^{2k} \right] 
    &\leq L_{\Phi,2k} \|\theta-\theta^*\|_2^2 + B_{\Phi, 2k}, \qquad \forall \theta \in \Theta, \label{eq: phi' assump}\\
    \E\left[ |\Phi^{(t)}(x_i^T\theta^* )|^k \right] &\leq M_{\Phi, t, k}, \label{eq: phi^(t) assump}
\end{align}
and
\begin{align}\label{eq: phi^(t) assump bounded}
    \|\Phi^{(t)}\|_\infty \leq \overline{M}_{\Phi, t},
\end{align}
for pairs $(k,t)$ to be specified in the sequel, where $\Phi^{(t)}$ is the $t^{\text{th}}$ derivative of $\Phi$. 

We also make assumptions on the boundedness of moments of $x_i \in \real^p$. We say that $x_i$ has bounded $2k^{\text{th}}$ moments if there is a constant $\widetilde{C}_{2k}>0$ such that for every unit vector $v\in \real^d$, we have $\E[(x_i^Tv)^{2k}]\leq \widetilde{C}_{2k}\left(\E[(x_i^Tv)^{2}] \right)^k$.

\begin{assumption}
\label{AssCov}
Suppose the distribution of the $x_i$'s has bounded eighth moments. Let $\Sigma_x$ denote the finite covariance matrix of the $x_i$'s.
\end{assumption}

In order to apply Theorems~\ref{ThmPure} and \ref{ThmDamped} to GLMs, we need the Hessian $\nabla^2 \cR(\theta)$ to be Lipschitz smooth and satisfy $mI \preceq \nabla^2 \cR(\theta) \preceq MI$ for all $\theta\in \Theta$  close enough to the initialization $\theta_0$. We now verify these assumptions. The following results are proved in \ref{AppPropLM} and~\ref{AppPropConvexity}.

\begin{proposition}\label{prop: L, m, M}
Let the link function $\Phi$ satisfy inequality~\eqref{eq: phi^(t) assump bounded} for $t=3$, and suppose Assumption~\ref{AssCov} is satisfied. Then the Hessian $\nabla^2 \cR(\theta)$ is $L$-Lipschitz and satisfies $ \nabla^2 \cR(\theta) \preceq MI$ with
\begin{equation}
\label{EqnLM}
L \defn \sqrt{\widetilde{C}_4 \overline{M}_{\Phi, 3}}\|\Sigma_x\|_2, \qquad M \defn  \overline{M}_{\Phi, 2}\sqrt{\widetilde{C}_4}\|\Sigma_x\|_2.
\end{equation}
\end{proposition}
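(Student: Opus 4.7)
The plan is to write $\nabla^2\mathcal{R}(\theta)$ in closed form using the GLM loss, then bound its spectral norm and its variation in $\theta$ by reducing each to a scalar expectation that is handled by Cauchy--Schwarz together with the fourth-moment bound on $x$ and the derivative bounds on $\Phi$.

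First I would differentiate the loss~\eqref{eq:loss GLM} twice to obtain $\nabla^2 \mathcal{L}(\theta,(x,y)) = \Phi''(x^T\theta)\,xx^T$, so that
\begin{equation*}
\nabla^2 \mathcal{R}(\theta) \;=\; \mathbb{E}\bigl[\Phi''(x^T\theta)\,xx^T\bigr].
\end{equation*}
For the smoothness bound $\nabla^2 \mathcal{R}(\theta) \preceq MI$, I fix an arbitrary unit vector $v$ and write $v^T\nabla^2 \mathcal{R}(\theta) v = \mathbb{E}\bigl[\Phi''(x^T\theta)(x^Tv)^2\bigr]$. Applying Cauchy--Schwarz splits this into $\sqrt{\mathbb{E}[\Phi''(x^T\theta)^2]}\cdot\sqrt{\mathbb{E}[(x^Tv)^4]}$. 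The first factor is controlled by $\|\Phi''\|_\infty \le \overline{M}_{\Phi,2}$, while the bounded-fourth-moment assumption on $x$ gives $\mathbb{E}[(x^Tv)^4] \le \widetilde{C}_4\bigl(v^T\Sigma_x v\bigr)^2 \le \widetilde{C}_4\|\Sigma_x\|_2^2$. Taking a supremum over unit $v$ yields the claimed $M$.

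For Lipschitzness, I again fix a unit vector $v$ and write
\begin{equation*}
v^T\bigl(\nabla^2 \mathcal{R}(\theta_1) - \nabla^2 \mathcal{R}(\theta_2)\bigr)v \;=\; \mathbb{E}\bigl[(\Phi''(x^T\theta_1)-\Phi''(x^T\theta_2))\,(x^Tv)^2\bigr].
\end{equation*}
The mean-value theorem applied to $\Phi''$ together with $\|\Phi'''\|_\infty \le \overline{M}_{\Phi,3}$ gives the pointwise bound $|\Phi''(x^T\theta_1)-\Phi''(x^T\theta_2)| \le \overline{M}_{\Phi,3}\,|x^T(\theta_1-\theta_2)|$. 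Substituting and applying Cauchy--Schwarz yields a product of the form $\sqrt{\mathbb{E}[(x^T(\theta_1-\theta_2))^2]}\cdot\sqrt{\mathbb{E}[(x^Tv)^4]}$. The first square root is at most $\sqrt{\|\Sigma_x\|_2}\,\|\theta_1-\theta_2\|_2$ by the covariance assumption, and the second is handled exactly as above via the fourth-moment bound. Extracting the factor of $\|\theta_1-\theta_2\|_2$ and taking a sup over $v$ produces the Lipschitz constant $L$.

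The main obstacle, and really the only nontrivial point, is bookkeeping: one has to choose the Cauchy--Schwarz split so that the derivative bound on $\Phi'''$ couples with a second-moment factor in $x$ while the remaining $(x^Tv)^2$ factor gets the fourth-moment treatment, so that the product combines into the stated constants rather than a weaker bound that depends on higher moments or scales poorly in $\|\Sigma_x\|_2$. Beyond this accounting, the argument is just the moment inequalities above applied twice.
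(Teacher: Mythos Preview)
Your proposal is correct and follows essentially the same route as the paper: express $\nabla^2\mathcal{R}(\theta)=\mathbb{E}[\Phi''(x^T\theta)xx^T]$, pass to the variational form $\sup_{\|v\|=1} v^T(\cdot)v$, and bound the resulting scalar expectation by Cauchy--Schwarz together with the fourth-moment assumption on $x$ and the derivative bounds on $\Phi$. The only cosmetic difference is that the paper applies Cauchy--Schwarz first and then invokes the mean value theorem inside the factor $\mathbb{E}[(\Phi''(x^T\theta_1)-\Phi''(x^T\theta_2))^2]^{1/2}$, whereas you apply the pointwise MVT bound first and then Cauchy--Schwarz; the two orderings are equivalent and yield the same estimate.
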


\begin{proposition}
\label{PropConvexity}
Suppose there exist constants $B, \tau > 0$ such that for any $\theta \in \real^p$ such that $\|\theta\|_2 \le B$, we have
\begin{equation}
\label{EqnXCond}
\widetilde{C}_4 \|\Sigma_x\|_2^2 \cdot \mprob(|x_i^T \theta| > \tau) \le \frac{1}{4} \lambda_{\min}^2(\Sigma_x).
\end{equation}
Define $b_\tau := \inf_{|u| \le \tau} \Phi''(u)$. Then $\frac{b_\tau}{2} \lambda_{\min}(\Sigma_x) I \preceq \nabla^2 \cR(\theta)$ for all $\theta \in \real^p$ such that $\|\theta\|_2 \le B$.
\end{proposition}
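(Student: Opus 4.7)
The plan is to lower-bound $v^\top \nabla^2 \cR(\theta) v$ uniformly over unit vectors $v$, using the explicit form of the Hessian for GLM loss. For the negative log-likelihood in \eqref{eq:loss GLM}, pointwise differentiation gives $\nabla^2 \cL(\theta,(x,y)) = \Phi''(x^\top \theta)\, x x^\top$, so
\begin{equation*}
v^\top \nabla^2 \cR(\theta) v = \E\!\left[\Phi''(x^\top \theta)\,(x^\top v)^2\right].
\end{equation*}
Since $\Phi$ is convex, $\Phi'' \ge 0$ everywhere, so dropping the mass on $\{|x^\top \theta| > \tau\}$ can only decrease the right-hand side.

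The next step is to restrict the expectation to $\{|x^\top \theta| \le \tau\}$, where by definition $\Phi''(x^\top \theta) \ge b_\tau$. This yields
\begin{equation*}
v^\top \nabla^2 \cR(\theta) v \;\ge\; b_\tau\, \E\!\left[(x^\top v)^2 \mathbf{1}\{|x^\top \theta| \le \tau\}\right] \;=\; b_\tau\Bigl(v^\top \Sigma_x v \;-\; \E\!\left[(x^\top v)^2 \mathbf{1}\{|x^\top \theta| > \tau\}\right]\Bigr).
\end{equation*}
For a unit vector $v$, the first term inside the parentheses is at least $\lambda_{\min}(\Sigma_x)$, so the remaining task is to show that the subtracted term is at most $\tfrac{1}{2}\lambda_{\min}(\Sigma_x)$ under hypothesis \eqref{EqnXCond}.

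For that control, I would apply Cauchy--Schwarz to separate the squared linear form from the indicator, then invoke the bounded fourth moment assumption on $x_i$: for any unit $v$,
\begin{equation*}
\E\!\left[(x^\top v)^2 \mathbf{1}\{|x^\top \theta| > \tau\}\right] \;\le\; \sqrt{\E[(x^\top v)^4]\cdot \mprob(|x^\top \theta|>\tau)} \;\le\; \sqrt{\widetilde{C}_4 \|\Sigma_x\|_2^2 \cdot \mprob(|x^\top \theta|>\tau)},
\end{equation*}
where the second inequality uses $\E[(x^\top v)^4] \le \widetilde{C}_4 (\E[(x^\top v)^2])^2 \le \widetilde{C}_4 \|\Sigma_x\|_2^2$. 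Assumption \eqref{EqnXCond} then bounds the right-hand side by $\tfrac{1}{2}\lambda_{\min}(\Sigma_x)$.

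Combining gives $v^\top \nabla^2 \cR(\theta) v \ge b_\tau \bigl(\lambda_{\min}(\Sigma_x) - \tfrac{1}{2}\lambda_{\min}(\Sigma_x)\bigr) = \tfrac{b_\tau}{2}\lambda_{\min}(\Sigma_x)$ for every unit $v$ and every $\theta$ with $\|\theta\|_2 \le B$, which is exactly the claimed lower bound. No step here looks delicate; the only point to be careful about is the direction of Cauchy--Schwarz and the use of the fourth moment bound with a unit vector, which together exactly match the form of \eqref{EqnXCond} and explain why the hypothesis is stated using $\widetilde{C}_4 \|\Sigma_x\|_2^2$ rather than a raw tail probability.
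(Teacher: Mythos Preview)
Your proof is correct and follows essentially the same approach as the paper: restrict to $\{|x^\top\theta|\le\tau\}$ using $\Phi''\ge 0$, lower-bound $\Phi''$ by $b_\tau$ there, then control the complementary piece via Cauchy--Schwarz and the bounded fourth moment assumption to invoke \eqref{EqnXCond}. The steps and inequalities match the paper's proof line for line.
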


\begin{remark}
\label{RemBounded}
Note that when the covariates are sub-Gaussian, we can certainly guarantee that the tail condition~\eqref{EqnXCond} is satisfied for sufficiently large $\tau$, since $x_i^T \theta$ is sub-Gaussian with parameter scaling with $B$ and the sub-Gaussian parameter $\sigma_x^2$ of $x_i$. Thus, we have
\begin{equation*}
\mprob(|x_i^T\theta| > \tau) \le c_1 \exp\left(-\frac{c_2\tau^2}{B^2 \sigma_x^2}\right),
\end{equation*}
and it suffices to take $\tau = c_3 B \sigma_x \log^{1/2}\left(\frac{c_4 \|\Sigma_x\|_2^2}{\lambda^2_{\min}(\Sigma_x)}\right)$. Furthermore, in the proofs of Theorems~\ref{ThmPure} and~\ref{ThmDamped} (cf.\ inequalities~\eqref{EqnOptErr} and~\eqref{EqnChain}, respectively), we show that $\|\theta_t - \theta^*\|_2$ remains bounded (where the bound depends on $\theta_0$ and the problem parameters).

In the case of logistic regression, we have $\Phi''(u) = \frac{e^u}{(1+e^u)^2}$, and it is easy to see that $b_\tau > 0$ for any value of $\tau$.
\end{remark}

\subsection{Preliminary Error Bounds}

From Lemmas~\ref{lem: grad est huber} and~\ref{lem: grad est heavy-tail}, we see that the term $\Cov(\D \cL(\theta,z))$ plays a crucial role in proving that our gradient estimates are robust. Likewise, Lemmas~\ref{lem: Hessian est huber} and~\ref{lem: Hessian est heavy-tail} show the importance of the term $\Cov(\flatten(\D^2\cL(\theta, z)))$ in proving that the Hessian estimates are robust. The following two lemmas provide upper bounds on these two terms for the specific case of GLMs:

\begin{lemma*}
[Lemma 4 in Prasad et al.~\cite{prasad2018}]\label{lem: grad est GLM}
Let  $\{z_i\}_{i=1}^n$ be $n$ i.i.d.\ samples drawn from a distribution that satisfies the GLM model~\eqref{eq: model GLM}. Let the link function $\Phi$ satisfy inequalities~\eqref{eq: phi' assump} and~\eqref{eq: phi^(t) assump} for $k \in \{1,2\}$ and $t \in \{2, 4\}$, and suppose Assumption~\ref{AssCov} is satisfied. Then the true distribution of gradients $\D \cL(\theta,z)$ has bounded fourth moments. Moreover, 
\begin{align}
    \|\Cov(\D \cL(\theta,z))\|_2
    &\leq C_1 \|\Sigma_x\|_2 \left(\sqrt{L_{\Phi, 4}} + L_{\Phi, 2} \right) \|\theta-\theta^*\|_2^2 \nonumber\\
    &\ \ \ + C_2 \|\Sigma_x\|_2 \left(B_{\Phi, 2} + \sqrt{B_{\Phi, 4}} + c(\sigma) \sqrt{M_{\Phi, 2, 2}} + \sqrt{c(\sigma)^3 M_{\Phi, 4, 1}}\right),
\end{align}
where $C_1, C_2 > 0$ are constants.
\end{lemma*}

\begin{lemma*}
\label{lem: Hess est GLM}
Let  $\{z_i\}_{i=1}^n$ be $n$ i.i.d.\ samples drawn from a distribution that satisfies the GLM model~\eqref{eq: model GLM}. Let the link function $\Phi$ satisfy inequality~\eqref{eq: phi^(t) assump bounded} for $t=2$, and suppose Assumption~\ref{AssCov} is satisfied. Then the distribution of the flattened Hessian $\flatten(\D^2 \cL(\theta,z))$ has bounded fourth moments. Moreover, we have
\begin{align}
\label{EqnTrCov}
     \tr(\Cov(\flatten(\D^2 \cL(\theta,z)))) \leq \overline{M}_{\Phi, 2}^2 \widetilde{C}_4p^2 \|\Sigma_x\|_2^2.
\end{align}
\end{lemma*}

The proof of Lemma~\ref{lem: Hess est GLM} is contained in \ref{AppLemHessGLM}.

\begin{remark}
\label{Rem4Wise}
Note that under additional assumptions (e.g., 4-wise independence of the components of the $x_i$'s), we can prove that
\begin{equation*}
\|\Cov(\flatten(\D^2 \cL(\theta, z)))\|_2 \le C\widetilde{C}_4 \|\Sigma_x\|_2^2,
\end{equation*}
for some constant $C > 0$, which avoids an extra dimension-dependent factor in comparison to inequality~\eqref{EqnTrCov} (cf.\ Proposition 4.2 in Lai et al.~\cite{lai2016}) for the Huber contamination setting. Indeed, only the spectral norm of the covariance of the flattened Hessian appears in the deviation bound of Lemma~\ref{lem: Hessian est huber} (Huber's $\epsilon$-contamination model);  the trace of the covariance appears in Lemma~\ref{lem: Hessian est heavy-tail} (heavy-tailed model).
\end{remark} 


For applying Theorems~\ref{ThmPure} and \ref{ThmDamped}, we also need the robust estimate of the losses to be close to the population risk, as in inequality~\eqref{EqnMedianErr}. In the following two lemmas, we show that this assumption holds with high probability for the robust estimates obtained by applying Algorithms~\ref{alg:Huber estimator} and \ref{alg:heavy tailed estimator} on the losses. Further note that the following lemmas require boundedness of higher-order moments of $\cL(\theta, z)$, which can be justified in our scenario if $\theta$ is bounded. As mentioned in Remark~\ref{RemBounded}, we can indeed assume that the iterates $\{\theta_t\}$, to which Lemmas~\ref{lem: rob est Hub} and~\ref{lem: rob est heavy} are applied in the sequel, are bounded.

The following result is a consequence of Lemma 14 in Prasad et al.~\cite{prasad2018}:

\begin{lemma*}
\label{lem: rob est Hub}
Let  $\{z_i\}_{i=1}^n$ be $n$ i.i.d.\ samples drawn from a Huber $\epsilon$-contaminated distribution~\eqref{eq: huber model}, where the true distribution satisfies the GLM model~\eqref{eq: model GLM}. Let $\cL(\theta, z)$ have bounded fourth moments. Then with probability at least $1-\delta$, the robust estimate returned by Algorithm~\ref{alg:Huber estimator} satisfies
\begin{multline*}
    \left|\text{{\sc HuberEstimate}} ( \{ \mathcal{L}(\theta, z_i) \}_{i=1}^n) - \cR(\theta) \right| \\
    \leq C_1 \left( \epsilon + \sqrt{\frac{\log(n/\delta)}{n}} \right)^\frac{3}{4} + C_2 \left( \epsilon +  \sqrt{\frac{\log(n/\delta)}{n}} \right)^\frac{1}{2} \frac{\log(1/\delta)}{n},
\end{multline*}
where $C_1, C_2 > 0$ are constants.
\end{lemma*}

The next result follows from similar arguments to those in Lemma~\ref{lem: grad est heavy-tail}:

\begin{lemma*}
\label{lem: rob est heavy}
Let  $\{z_i\}_{i=1}^n$ be $n$ i.i.d.\ samples drawn from a heavy-tailed distribution $P$ that satisfies the  GLM model~\eqref{eq: model GLM}.
Let $\cL(\theta, z)$ have bounded second moments. Then with probability at least $1-\delta$, the robust estimate returned by Algorithm~\ref{alg:heavy tailed estimator} satisfies
\begin{equation*}
\left|\text{{\sc HeavyTailedEstimate}} ( \{ \mathcal{L}(\theta, z_i) \}_{i=1}^n) - \cR(\theta) \right|
\le C \sqrt{\frac{\log\left(\frac{1.4}{\delta}\right)}{n}},
\end{equation*}
with probability at least $1-\delta$, where $C > 0$ is a constant.
\end{lemma*}


\subsection{Huber Contamination}
\label{sec: app huber}

Throughout this subsection, we work under the following assumptions:
\begin{assumption}
\label{AssHuber}
Suppose the link function $\Phi$ satisfies inequalities~\eqref{eq: phi' assump} and~\eqref{eq: phi^(t) assump} for $k \in \{1,2,4\}$ and $t \in \{2, 4\}$, and inequality~\eqref{eq: phi^(t) assump bounded} for $t \in \{2,3\}$.
Also suppose $\cR(\theta)$ is $m$-strongly convex, i.e., $mI \preceq \nabla^2 \cR(\theta)$ uniformly over $\theta$, and $L$ and $M$ are defined as in equation~\eqref{EqnLM}.
\end{assumption}


We then have the following result, proved in \ref{AppThmGLMHuber}:

\begin{theorem}
\label{ThmGLMHuber}
Let  $\{z_i\}_{i=1}^n$ be $n$ i.i.d.\ samples drawn from a Huber $\epsilon$-contaminated distribution~\eqref{eq: huber model}, where the true distribution satisfies the GLM model~\eqref{eq: model GLM} and the conditions of Assumptions~\ref{AssCov} and~\ref{AssHuber} are satisfied.
Define $(\eta, c_2, \gamma_g, \gamma_h)$ as in Theorem~\ref{ThmPure} and $\gamma$ as in Theorem~\ref{ThmDamped}. Suppose
\begin{equation}
\label{EqnCondZeta}
\zeta = C\max\left\{\frac{\gamma_g \eta}{m} + \frac{\gamma_h \eta^2}{m^2}, \left(\epsilon + \sqrt{\frac{\log(n/\delta)}{n}} \right)^\frac{3}{4} + \left( \epsilon +  \sqrt{\frac{\log(n/\delta)}{n}} \right)^\frac{1}{2} \frac{\log(1/\delta)}{n}\right\} \le \frac{\gamma}{2}.
\end{equation}
Let $\delta>0$.
Define
\begin{align}
    \widehat{\gamma}
    &\defn  \min\left\{ \frac{\widehat{\alpha}_g}{c_1 \sqrt{\|\Sigma_x\|_2 \log p}}, \frac{\widehat{\beta}_g}{c_2 \sqrt{\|\Sigma_x\|_2 \log p}}, \frac{\widehat{\beta}_h}{c_3 \|\Sigma_x\|_2 p \sqrt{\log p}}  \right\},
\end{align}
where $(\widehat{\alpha}_g, \widehat{\beta}_h, \widehat{\beta}_g)$ are as defined in Lemma~\ref{lem: thm asmps}.
Suppose $n$ and $\epsilon$ are such that
\begin{align}
\label{EqnCondGamma}
     \se + \gamma\left(n,p,\delta,\epsilon\right)
     < \widehat{\gamma}.
\end{align}
Then applying Algorithm~\ref{alg:robust Newton} on $\{z_i\}_{i=1}^n$ with initialization $\theta_0\in \Theta$ and number of iterations
 \begin{align*}
    T \ge  \frac{\cR(\theta_0) - \cR(\theta^*)}{\gamma/2} + \log_2\log_2\left(\frac{6c_2L}{m^2}\right)
\end{align*}
returns an output such that 
\begin{equation*}
\|\theta_T - \theta^*\|_2 \leq \frac{12c_2}{m} =
O\left(p \sqrt{\epsilon \log p}\right),
\end{equation*}
with probability at least $1 - T\delta\left(5 + \Big\lceil\frac{\log\left(\frac{m}{M}\left(\frac{1}{2} - \kappa_1\right)\right)}{\log \kappa_2}\Big\rceil \right)$. 
\end{theorem}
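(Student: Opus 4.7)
The plan is to piece together Theorems~\ref{ThmPure} and~\ref{ThmDamped} under the GLM-specific error bounds established in Section~\ref{SecRobustEst} and the preliminary bounds of this section. First, I would verify that under Assumptions~\ref{AssCov} and~\ref{AssHuber}, the population risk $\cR(\theta)$ is twice differentiable with $L$-Lipschitz Hessian and satisfies $mI \preceq \nabla^2 \cR(\theta) \preceq MI$ on the region containing the iterates: the upper bound comes from Proposition~\ref{prop: L, m, M}, and the lower bound is supplied by Assumption~\ref{AssHuber} (or alternatively by Proposition~\ref{PropConvexity} once the iterates are known to be bounded). Next, I would specialize the abstract error parameters $(\alpha_g, \beta_g, \alpha_h, \beta_h)$ of Definitions~\ref{defn: robust gradient estimator} and~\ref{defn: robust Hessian estimator}, obtained by composing Lemmas~\ref{lem: grad est huber} and~\ref{lem: grad est GLM} for the gradient, and Lemmas~\ref{lem: Hessian est huber} and~\ref{lem: Hess est GLM} for the Hessian. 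Concretely $\alpha_g, \beta_g \lesssim (\sqrt{\epsilon}+\gamma(n,p,\delta,\epsilon))\sqrt{\|\Sigma_x\|_2 \log p}$ times GLM moment factors, while $\beta_h$ acquires an extra $p$ factor because the spectral norm of $\Cov(\flatten(\nabla^2\cL))$ is controlled by its trace from Lemma~\ref{lem: Hess est GLM}. Condition~\eqref{EqnCondGamma} is exactly calibrated so that inequalities~\eqref{EqnGammaBd} and~\eqref{eq: thmpure asmp c2} of Theorem~\ref{ThmPure} and the conditions~\eqref{eq: thmdamped asmp1}--\eqref{eq: thmdamped asmp2} of Theorem~\ref{ThmDamped} all hold, which I would verify by direct substitution via Lemma~\ref{lem: thm asmps}.

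The second step addresses the robust loss estimates used inside backtracking linesearch. By Lemma~\ref{lem: rob est Hub}, the \textsc{HuberEstimate} of $\cR(\theta)$ at any fixed $\theta$ deviates from the true risk by at most the quantity inside the $\max$ in~\eqref{EqnCondZeta}, so choosing $\zeta$ as in~\eqref{EqnCondZeta} simultaneously ensures that $\zeta/4$ dominates this deviation (giving the hypothesis~\eqref{EqnMedianErr} of Theorem~\ref{ThmPure}), that $\zeta$ dominates the right-hand side of~\eqref{eqn: backtracking zeta}, and that $\zeta \le \gamma/2$ as required in Theorem~\ref{ThmDamped}. A union bound over the $T$ outer iterations, and within each iteration over the single gradient evaluation, the single Hessian evaluation, and at most $\lceil \log(\tfrac{m}{M}(\tfrac12 - \kappa_1))/\log \kappa_2\rceil + 3$ loss evaluations inside \textsc{BacktrackingLineSearch} before the Armijo-style exit condition fires, yields the stated failure probability. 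The bound on the number of linesearch halvings per iteration is standard in Newton-with-backtracking analysis and is established within the proof of Theorem~\ref{ThmDamped}.

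Next I would execute the two-phase argument. In the damped Newton phase, while $\|\nabla f(\theta_t)\|_2 \ge \eta$, Theorem~\ref{ThmDamped} gives $f(\theta_{t+1}) - f(\theta_t) < -\gamma/2$, so the phase terminates in at most $(\cR(\theta_0)-\cR(\theta^*))/(\gamma/2)$ iterations by the telescoping argument already given below Theorem~\ref{ThmDamped}. Relabeling the first iterate with $\|\nabla f(\theta_t)\|_2 < \eta$ as a new $\theta_0$ and invoking Theorem~\ref{ThmPure}, bound~\eqref{EqnContract} gives $\|\theta_t - \theta^*\|_2 \le (m/L)(1/2)^{2^t} + 6c_2/m$, so after $t \ge \log_2\log_2(6c_2 L/m^2)$ additional steps the doubly exponential term drops below $6c_2/m$, producing the final bound $12 c_2/m$. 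The total number of iterations is precisely the count stated in the theorem.

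Finally, I would translate $12c_2/m$ into the advertised $O(p\sqrt{\epsilon \log p})$ rate. Plugging the specialized $(\gamma_g, \gamma_h)$ from step one into the definition of $c_2$ in Theorem~\ref{ThmPure} shows that $c_2 \asymp \gamma_g + \gamma_h \eta/m$ to leading order, with the dominant scaling being the $p\sqrt{\epsilon \log p}$ contribution inherited from $\beta_h$ via the trace bound of Lemma~\ref{lem: Hess est GLM}; the quadratic cross terms $\gamma_g^2$ and $\gamma_g \gamma_h$ are lower order because~\eqref{eq: thmpure asmp c2} forces $\gamma_g,\gamma_h$ to be small. I expect the main technical obstacle to be the bookkeeping: carefully verifying that the many nested smallness conditions on $c_2, \zeta, \gamma_g, \gamma_h$ relative to $\eta, m, M, L, \gamma$ can all be enforced simultaneously under the single scalar condition~\eqref{EqnCondGamma}, and that the bounded-region hypothesis needed for the GLM moment bounds and for Proposition~\ref{PropConvexity} persists throughout the trajectory, which follows from monotonicity of $f$ along the damped phase and from the contraction~\eqref{EqnContract} in the pure phase.
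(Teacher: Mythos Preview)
Your proposal is correct and follows essentially the same approach as the paper's proof: verify the regularity conditions on $\cR$ via Proposition~\ref{prop: L, m, M} and Assumption~\ref{AssHuber}, instantiate $(\alpha_g,\beta_g,\alpha_h,\beta_h)$ by combining the Huber-estimator lemmas with the GLM covariance bounds (the paper packages this as Proposition~\ref{PropErrsHuber}), use condition~\eqref{EqnCondGamma} together with Lemma~\ref{lem: thm asmps} to trigger the hypotheses of Theorems~\ref{ThmPure} and~\ref{ThmDamped}, handle the linesearch accuracy via Lemma~\ref{lem: rob est Hub} and the choice of $\zeta$, run the two-phase damped/pure argument, and finish with a union bound over the at most $T$ gradient, $T$ Hessian, and $O(T)$ loss evaluations. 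Your identification of the $p\sqrt{\epsilon\log p}$ rate as coming from the $\gamma_h$ term (via the trace bound on the flattened Hessian covariance) matches the paper exactly.
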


\begin{remark}
\label{RemThmCond}
Examining the condition~\eqref{EqnCondGamma}, we see that (assuming $(m, M, L, \eta, \|\Sigma_x\|_2)$ are all constants) we have a required upper bound of $\widehat{\gamma}^2 \asymp \frac{1}{\log p}$ on the contamination proportion $\epsilon$. Furthermore, from the expression~\eqref{eq: gamma}, we have (ignoring log factors) $n \succsim \max\left\{p, \epsilon p^2 \right\}$. The condition~\eqref{EqnCondZeta} likewise gives a minimum sample size requirement on $n$ in terms of $\delta$.
\end{remark}

\begin{remark}
It is instructive to compare the result of Theorem~\ref{ThmGLMHuber} to Theorem 4 in Prasad et al.~\cite{prasad2018}, which gives a convergence statement of the form
\begin{equation*}
\|\theta^t - \theta^*\|_2 \le \kappa^t \|\theta^0 - \theta^*\|_2 + \frac{C\|\Sigma_x\|_2^{1/2}\sqrt{\log p}}{1-\kappa} \left(\sqrt{\epsilon} + \gamma(n,p,\delta,\epsilon)\right)
\end{equation*}
for iterates $\{\theta^t\}_{t \ge 0}$ of robust gradient descent. For sufficiently large $t$, the second term dominates, leaving an error term of $O(\sqrt{\epsilon \log p})$. Our theorem has a dominant factor of $O(p \sqrt{\epsilon \log p})$, which can be reduced to $O(\sqrt{\epsilon \log p})$ if we assume 4-wise independence of the coordinates of the covariate distribution (cf.\ Remark~\ref{Rem4Wise} above). In terms of the convergence rate of the optimization procedure, however, we just need $T \asymp \log\log \frac{1}{\epsilon}$, compared to $T \asymp \log \frac{1}{\epsilon}$ in the case of robust gradient descent.
\end{remark}

Linear regression is of course a special case of GLMs, for which Theorem~\ref{ThmGLMHuber} readily applies. On the other hand, note that a much more direct way to obtain a robust estimator for linear regression would be to directly robustify the estimator $\thetahat_{OLS} = \left(\frac{X^TX}{n}\right)^{-1} \left(\frac{X^T y}{n}\right)$, where we apply Algorithm~\ref{alg:Huber estimator} to obtain robust estimates of $\E[y_i x_i]$ and $\E[x_i x_i^T]$ (the latter matrix being vectorized before applying the agnostic mean algorithm). Indeed, in the non-robust case, applying Newton's method to the ordinary least squares objective converges in a single step. A careful analysis of this so-called ``robust plug-in estimator" would also give an error of $O(\sqrt{\epsilon})$ in the robust case, but a direct analysis would provide an error bound which depends on $\|\theta^*\|_2$, since $\Cov(x_i, y_i)$ would scale with $\|\theta^*\|_2$ (cf.\ Corollary 3 in Prasad et al.~\cite{prasad2018}). On the other hand, the guarantee of Theorem~\ref{ThmGLMHuber} for the full robust Newton's method does not involve $\|\theta^*\|_2$.

\begin{remark}
A natural question is whether the estimation error upper bounds in Theorem~\ref{ThmGLMHuber} are tight: For i.i.d.\ samples from a GLM with Huber $\epsilon$-contamination, is it possible to derive estimators with error smaller than $C\sqrt{\epsilon}$? For the case of linear regression, this problem has been studied quite carefully, and it has been established that when the uncontaminated data are Gaussian with an isotropic covariance, the rate should be $\Theta(\epsilon)$~\cite{chen2016general, diakonikolas2019efficient, pensia2020robust, depersin2020spectral}, with no dependence at all on $p$.
In the case when the covariates only follow a bounded fourth moment assumption, the rate improves to $\Theta(\sqrt{\epsilon})$~\cite{cherapanamjeri2020optimal, cherapanamjeri2020algorithms}. We are not aware of existing lower bounds in the literature for more general GLMs, though it is reasonable to conjecture that the optimal rates for estimation in the Huber contamination model can also be made dimension-independent.
\end{remark}

In terms of computational complexity, the overall complexity of the robust Newton method is the number of iterations $T$ multiplied by the computational complexity of robust gradient/Hessian computations. As mentioned at the end of Section~\ref{SecHubEps}, the runtime of Algorithm~\ref{alg:Huber estimator} is $\widetilde{O}(p^3)$; since we would be applying this to the vectorized Hessian matrices, the computational complexity of the robust Newton method would then be $\widetilde{O}(Tp^6)$ (note that $T$ depends on $\epsilon$ rather than $n$ and $p$).


\subsection{Heavy-Tailed Distributions}
\label{sec: app heavy-tail}

Throughout this subsection, we work under the following assumptions:
\begin{assumption}
\label{AssHeavy}
Suppose the link function $\Phi$ satisfies inequalities~\eqref{eq: phi' assump} and~\eqref{eq: phi^(t) assump} for $k \in \{1,2\}$ and $t \in \{2, 4\}$, and inequality~\eqref{eq: phi^(t) assump bounded} for $t \in \{2,3\}$. Also suppose $\cR(\theta)$ is $m$-strongly convex, i.e., $mI \preceq \nabla^2 \cR(\theta)$ uniformly over $\theta$, and $L$ and $M$ are defined as in equation~\eqref{EqnLM}.
\end{assumption}


We then have the following result, proved in \ref{AppThmGLMHeavy}:

\begin{theorem}
\label{ThmGLMHeavy}
Let  $\{z_i\}_{i=1}^n$ be $n$ i.i.d.\ samples drawn from a heavy-tailed distribution $P$ that satisfies the GLM in \eqref{eq: model GLM}, and suppose the conditions of Assumptions~\ref{AssCov} and~\ref{AssHeavy} are satisfied.
Define $(\eta, c_2, \zeta, \gamma_g, \gamma_h)$ as in Theorem~\ref{ThmPure} and $\gamma$ as in Theorem~\ref{ThmDamped}.
Let $\delta>0$.
Suppose $n$ satisfies
\begin{align}
\label{eq: ThmGLMHeavy n asmp}
     n > C \max\left\{ \frac{p}{\widehat{\alpha}_g^2}, \frac{p}{\widehat{\beta}_g^2}, \frac{p^2 \|\Sigma_x\|_2^2}{\widehat{\beta}_h^2}, \frac{1}{\zeta^2} \right\} \log\left(\frac{1.4}{\delta} \right),
\end{align}
where $(\widehat{\alpha}_g, \widehat{\beta}_g, \widehat{\beta}_h)$ are as defined in Lemma~\ref{lem: thm asmps}.
Then applying Algorithm~\ref{alg:robust Newton} on $\{z_i\}_{i=1}^n$, with initialization $\theta_0\in \Theta$ and number of iterations
 \begin{align*}
    T \ge  \frac{\cR(\theta_0) - \cR(\theta^*)}{\gamma/2} + \log_2\log_2\left(\frac{6c_2L}{m^2}\right),
\end{align*}
returns an output such that 
\begin{equation*}
\|\theta_T - \theta^*\|_2 \leq \frac{12c_2}{m} = O\left(\sqrt{\frac{p^2}{n}}\right),
\end{equation*}
with probability at least $1 - T\delta \left(5 + \Big\lceil\frac{\log\left(\frac{m}{M}\left(\frac{1}{2} - \kappa_1\right)\right)}{\log \kappa_2}\Big\rceil \right)$. 
\end{theorem}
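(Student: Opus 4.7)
The plan is to follow the same template as the proof of Theorem~\ref{ThmGLMHuber}, replacing each invocation of the Huber-contamination mean-estimation guarantee by its heavy-tailed counterpart. The three ingredients to verify are (i) the Lipschitz and strong convexity of $f=\cR$, (ii) instantiations of Definitions~\ref{defn: robust gradient estimator}--\ref{defn: robust Hessian estimator} with explicit $(\alpha_g,\beta_g,\alpha_h,\beta_h)$ for the heavy-tailed regime, and (iii) the fact that the sample size condition~\eqref{eq: ThmGLMHeavy n asmp} is strong enough to satisfy the numerical hypotheses~\eqref{EqnGammaBd}--\eqref{eq: thmpure asmp c2} of Theorem~\ref{ThmPure} and~\eqref{eq: thmdamped asmp1}--\eqref{eq: thmdamped asmp2} of Theorem~\ref{ThmDamped}, together with $\zeta\le\gamma/2$.

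For (i), Proposition~\ref{prop: L, m, M} supplies Lipschitzness of $\D^2 f$ and $\D^2 f\preceq MI$, while $m$-strong convexity is assumed in Assumption~\ref{AssHeavy}. For (ii), I would combine Lemma~\ref{lem: grad est heavy-tail} with Lemma~\ref{lem: grad est GLM}, using $\tr(A)\le p\|A\|_2$, to obtain at any fixed $\theta$ and with probability at least $1-\delta$ a bound of the form
\begin{equation*}
\|g(\theta)-\D\cR(\theta)\|_2 \le \alpha_g\|\theta-\theta^*\|_2 + \beta_g,
\end{equation*}
where $\alpha_g$ and $\beta_g$ are each of order $\sqrt{p\log(1/\delta)/n}$, with constants inherited from Lemma~\ref{lem: grad est GLM}. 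Similarly, Lemma~\ref{lem: Hessian est heavy-tail} together with the trace bound $\tr(\Cov(\flatten(\D^2\cL(\theta,z))))\le \overline{M}_{\Phi,2}^2\widetilde{C}_4 p^2\|\Sigma_x\|_2^2$ from Lemma~\ref{lem: Hess est GLM} yields $\alpha_h=0$ and $\beta_h$ of order $p\|\Sigma_x\|_2\sqrt{\log(1/\delta)/n}$. The backtracking linesearch tolerance~\eqref{EqnMedianErr} is provided by Lemma~\ref{lem: rob est heavy}, contributing an error of order $\sqrt{\log(1/\delta)/n}$; the boundedness of iterates required to invoke this lemma is justified as in Remark~\ref{RemBounded}.

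With these constants in hand I would verify (iii) by noting that the four terms in the lower bound~\eqref{eq: ThmGLMHeavy n asmp}, namely $p/\widehat\alpha_g^2$, $p/\widehat\beta_g^2$, $p^2\|\Sigma_x\|_2^2/\widehat\beta_h^2$, and $1/\zeta^2$, are calibrated so that the $\alpha_g,\beta_g,\beta_h$ derived above sit below the thresholds $(\widehat\alpha_g,\widehat\beta_g,\widehat\beta_h)$ bundled in Lemma~\ref{lem: thm asmps}. These thresholds are exactly what is needed to make $\gamma_g,\gamma_h$ satisfy~\eqref{EqnGammaBd}, $c_2$ satisfy~\eqref{eq: thmpure asmp c2}, and the damping-phase inequalities~\eqref{eq: thmdamped asmp1}--\eqref{eq: thmdamped asmp2} hold. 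The $1/\zeta^2$ term absorbs the loss-estimation error from Lemma~\ref{lem: rob est heavy}, ensuring $\zeta\le \gamma/2$.

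Finally, Theorem~\ref{ThmDamped} guarantees that while $\|\D f(\theta_t)\|_2\ge \eta$ the objective decreases by at least $\gamma/2$ per step, so after at most $(\cR(\theta_0)-\cR(\theta^*))/(\gamma/2)$ iterations an iterate enters the pure-Newton region $\|\D f(\theta_t)\|_2<\eta$; relabelling this iterate as the new $\theta_0$ and invoking Theorem~\ref{ThmPure} yields the contraction~\eqref{EqnContract}. Choosing $T$ as stated forces the doubly-exponential term $(m/L)(1/2)^{2^t}$ below $6c_2/m$, producing the final bound $12c_2/m$. Since $c_2$ is dominated by $\gamma_h\asymp p/\sqrt{n}=\sqrt{p^2/n}$ (which strictly exceeds $\gamma_g\asymp\sqrt{p/n}$), the overall error is $O(\sqrt{p^2/n})$, as claimed. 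The main bookkeeping obstacle is the union bound: across $T$ iterations, each step invokes up to five heavy-tailed subroutines---one robust gradient, one robust Hessian, and up to $\lceil \log((m/M)(1/2-\kappa_1))/\log\kappa_2\rceil$ robust loss evaluations inside the linesearch---each failing with probability at most $\delta$ when applied at the current iterate via Definitions~\ref{defn: robust gradient estimator}--\ref{defn: robust Hessian estimator}; collecting these contributions yields the stated probability $1-T\delta(5+\lceil \log((m/M)(1/2-\kappa_1))/\log\kappa_2\rceil)$.
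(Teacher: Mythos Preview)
Your proposal is correct and follows essentially the same route as the paper's proof, which likewise instantiates Proposition~\ref{PropErrsHeavy} (your step (ii)), invokes Lemma~\ref{lem: thm asmps} via the sample-size condition~\eqref{eq: ThmGLMHeavy n asmp}, and then applies Theorems~\ref{ThmDamped} and~\ref{ThmPure} in sequence with the same union-bound bookkeeping. The only minor imprecision is that the $1/\zeta^2$ term in~\eqref{eq: ThmGLMHeavy n asmp} is what guarantees the robust-loss accuracy~\eqref{EqnMedianErr} (via Lemma~\ref{lem: rob est heavy}), not the condition $\zeta\le\gamma/2$; the latter follows separately from $\gamma_g,\gamma_h$ being made small through the first three terms of~\eqref{eq: ThmGLMHeavy n asmp}.
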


\begin{remark}
\label{Rem4WiseHessian}
Again, assuming 4-wise independence of the coordinates of the covariate distribution, we can reduce the dimension-dependence of the bounds (cf.\ Remark~\ref{Rem4Wise}). We then take $\beta_h \asymp \sqrt{p} \|\Sigma_x\|_2^2$, to obtain an estimation error bound of the form $\|\thetahat - \theta^*\|_2 = O\left(\sqrt{\frac{p}{n}}\right)$.
\end{remark}

We also briefly present a separate line of analysis that allows us to improve the estimation error rates from $O\left(\sqrt{\frac{p^2}{n}}\right)$ to $O\left(\sqrt{\frac{p}{n}}\right)$ under milder assumptions than the distributional assumptions mentioned in Remark~\ref{Rem4WiseHessian}. Theorem 1 of Minsker~\cite{minsker2018sub} discusses an estimator $\hat{T}$ for the mean $\E[Y]$ of i.i.d.\ observations $Y_1, \dots, Y_n \in \mathbb{R}^{p \times p}$, with sub-Gaussian rates; i.e.,
\begin{equation*}
\mprob\left(\|\hat{T} - \E[Y]\|_2 \ge \|\E[Y^2]\|_2^{1/2} \sqrt{\frac{t}{n}}\right) \le 2p\exp\left(-\frac{t}{2}\right),
\end{equation*}
for any $t > 0$. In our setting, we have
\begin{equation*}
\E[Y^2] = \E\left[(\Phi''(x_i^T \theta))^2 x_i x_i^T x_i x_i^T\right] \le \overline{M}_{\Phi, 2}^2 \E\left[\|x_i\|_2^2 \cdot x_i x_i^T\right].
\end{equation*}
In particular, for any unit vector $v \in \real^p$, we have
\begin{equation*}
v^T \E[Y^2] v \le \overline{M}_{\Phi, 2}^2 \E\left[\|x_i\|_2^2 \cdot (x_i^T v)^2\right] \le \overline{M}_{\Phi, 2}^2 \sqrt{\E\left[\|x_i\|_2^4\right] \cdot \E\left[(x_i^T v)^4\right]}.
\end{equation*}
By assumption, we have
\begin{equation*}
\E\left[(x_i^T v)^4\right] \le \widetilde{C}_4 \left(\E\left[(x_i^T v)^2\right]\right)^2 \le \widetilde{C}_4 \cdot \lambda_{\max}(\Sigma_x)^2 = O(1).
\end{equation*}
Furthermore,
\begin{equation*}
\E\left[\|x_i\|_2^4\right] = \E\left[\left(\sum_{i=1}^p x_{ij}^2\right)^2\right],
\end{equation*}
and as argued in the proof of Lemma~\ref{lem: Hess est GLM}, this is $O(p^2)$. Altogether, we conclude that $\|\E[Y^2]\|_2 = O(p)$, so using this high-probability bound in place of Lemma~\ref{lem: Hessian est heavy-tail} leads to an improvement in Proposition~\ref{PropErrsHeavy} with $\beta_h = O\left(\sqrt{\frac{p}{n}}\right)$.

As mentioned at the end of Section~\ref{SecHT}, the computational complexity of Algorithm~\ref{alg:heavy tailed estimator} is $\widetilde{O}(n+p)$, leading to an overall runtime of $\widetilde{O}\left(T(n+p^2)\right)$ for the robust Newton method.


\section{Simulations}
\label{SecSims}

We note that in our simulations, we have implemented the code from Lai et al.~\cite{lai2016} for agnostic mean estimation. In particular, the outlier truncation step is slightly different from the one analyzed in Prasad et al.~\cite{prasad2018}, and consequently also in our theorems above.

\subsection{Huber's Contamination Model}

We begin with simulations for linear and logistic regression in Huber's contamination model.

\subsubsection{Linear Regression}
\label{SecSimLin}

For our simulations, we set the dimension to be $p=10$ and the number of data points to be $n=1000$. We simulated the clean covariates as $x_i \sim N(0, I_p)$, with corresponding responses $y_i = x_i^T \theta^* + w_i$, where $w_i \sim N(0, 0.1)$ is i.i.d.\ noise and the true parameter is $\theta^* = \frac{1}{\sqrt{p}}(1,1, \dots,1)$. We simulated the outlier covariates as $x_i \sim N(0, p^2 I_p)$, with corresponding responses $y_i = 0$.

Figure~\ref{fig:linreg_1} shows the results for Robust Newton's Method (RNM), Robust Gradient Descent (RGD), and ordinary least squares (OLS). We used the initialization $\theta_0 = (0.4, \dots, 0.4) + 10w$, with $w \sim \mathcal{N}(0,I_p)$, for both RNM and RGD. For RNM, we used the backtracking linesearch parameters $\kappa_1=0.01, \kappa_2=0.5$, and $\zeta=10^{-8}$. For RGD, we used stepsize $\eta=0.1$. We repeated the algorithm three times with contamination fractions $\epsilon=0.1,0.2$, and $0.3$. As seen in the figure, the statistical error indeed decreases quite quickly for RNM in comparison to RGD.

\begin{figure}[h]
\centering
\includegraphics[scale=0.5]{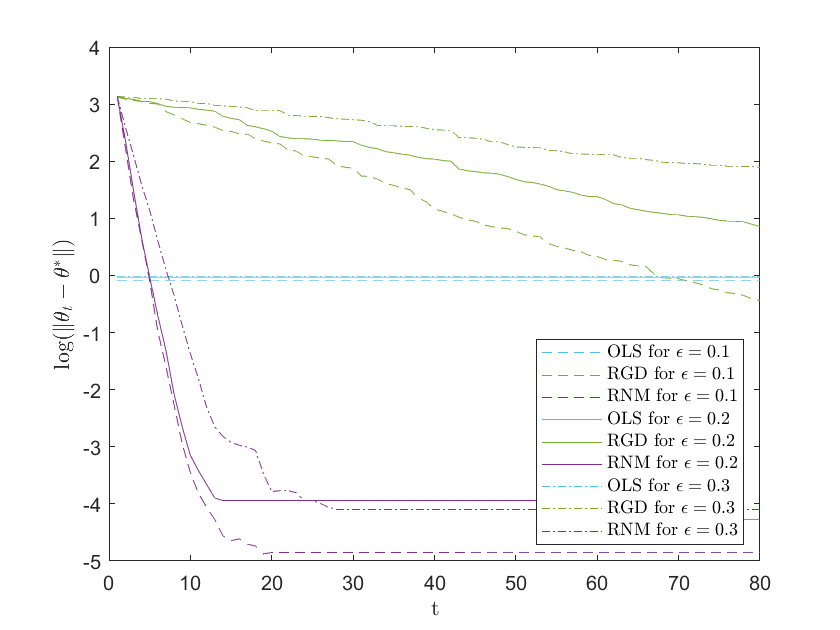}
\caption{Error log($\| \theta_t - \theta^* \|_2$) with respect to each iteration of Robust Newton's Method (RNM), Robust Gradient Descent (RGD), and ordinary least squares (OLS), for linear regression with Huber contamination.}
\label{fig:linreg_1}
\end{figure}

\subsubsection{Logistic Regression}\label{subsec: simul log reg}

Next, we generated data from a logistic model with $p=10$, $n=1000$, and $\theta^* = (1/\sqrt{p}, \ldots, 1/\sqrt{p})$, where we sampled the covariates as $x_i \sim \cN(0, \cI_p)$ and sampled $y_i \in \{0,1\}$ such that $p(y_i=1|x_i) = \frac{1}{1+e^{-x_i^T\theta^*}}$. We then randomly changed an $\epsilon$ fraction of the labels to be either $0$ or $1$, with equal probability. For various values of $\epsilon$, we ran Robust Gradient Descent (RGD) and Robust Newton's Method (RNM), and plotted the parameter error in Figure~\ref{fig:logreg_1}. For RNM, we used the same backtracking linesearch parameters as in the case of linear regression with Huber contamination. For RGD, we used a stepsize of $\eta = 3$. As seen in the figure, the statistical error again decreases more quickly for RNM than for RGD.
 
\begin{figure}[h]
\centering
\includegraphics[scale=0.25]{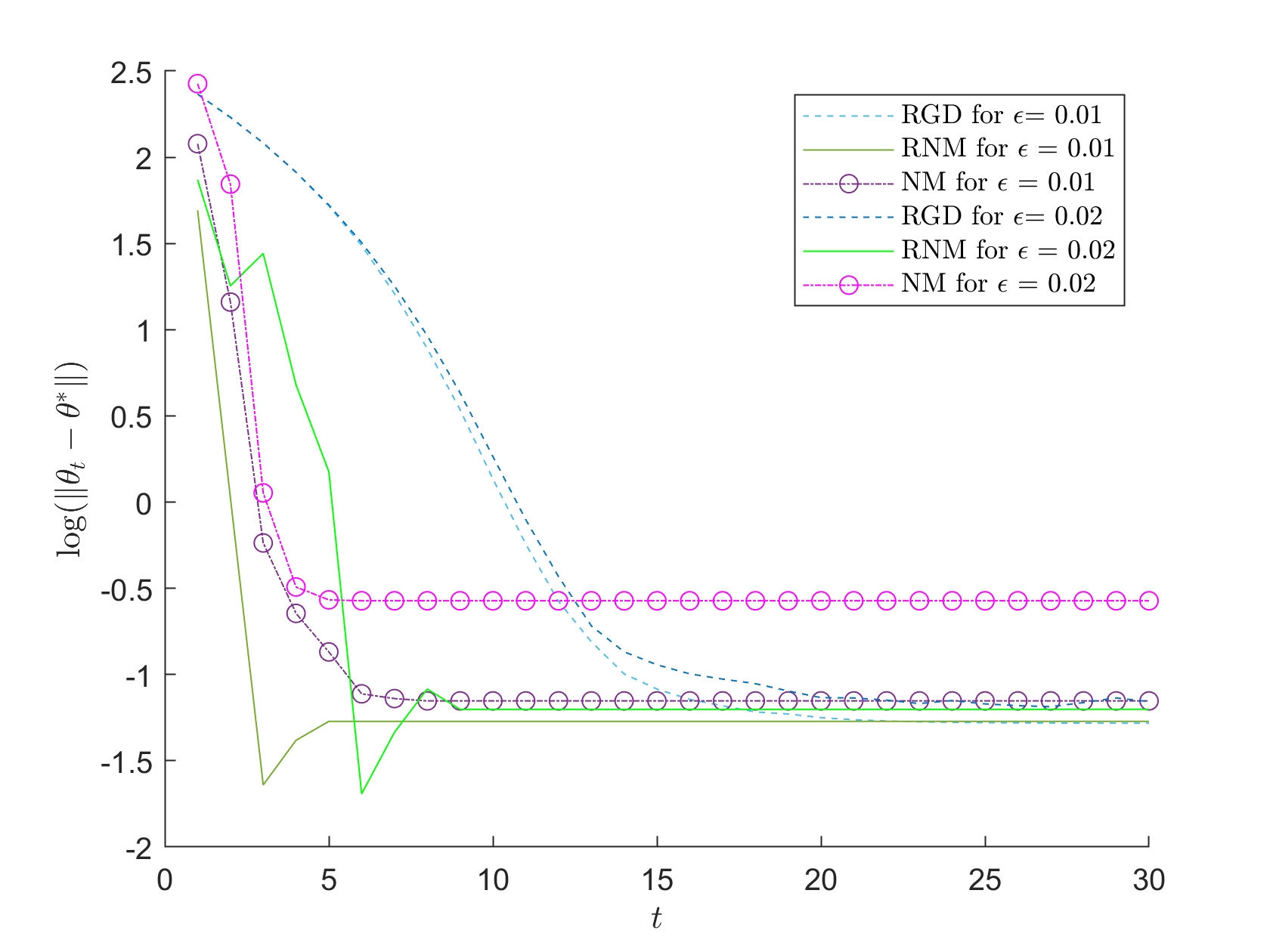}
\caption{Error log($\| \theta_t - \theta^* \|_2$) with respect to each iteration for Robust Newton's Method (RNM) and Robust Gradient Descent (RGD), for logistic regression with Huber contamination. The behavior of the non-robust optimizer, found using Newton's method (NM), is also shown for reference.}
\label{fig:logreg_1}
\end{figure}

\subsection{Heavy-Tailed Data}

For heavy-tailed data, we took $p=10$ and $n=1000$. We generated the covariates $x_i \sim N(0, I_p)$ and the corresponding responses $y_i=x_i^T \theta^* + w_i$, with $w_i$ following a Pareto distribution with variance $\sigma^2$ and tail-index parameter $\beta$. We set the regression parameter $\theta^* = \frac{1}{\sqrt{p}}(1,1, \dots,1)$. 

Figure~\ref{fig:linreg_3} compares the results of Robust Newton's Method (RNM), Robust Gradient Descent (RGD), and ordinary least squares (OLS). We used the initialization $\theta_0 = (10,10, \dots, 10)$ for both RNM and RGD. For RNM, we used the backtracking linesearch parameters $\kappa_1=0.01, \kappa_2=0.5$, and $\zeta=1000$. For RGD, we used stepsize $\eta=0.1$. We repeated the algorithm three times, for $\sigma=0.5,1$, and $1.5$, all with $\beta=1$. As seen in the figure, the statistical error again decreases more quickly for RNM than for RGD.

\begin{figure}[ht]
\centering
\includegraphics[scale=0.25]{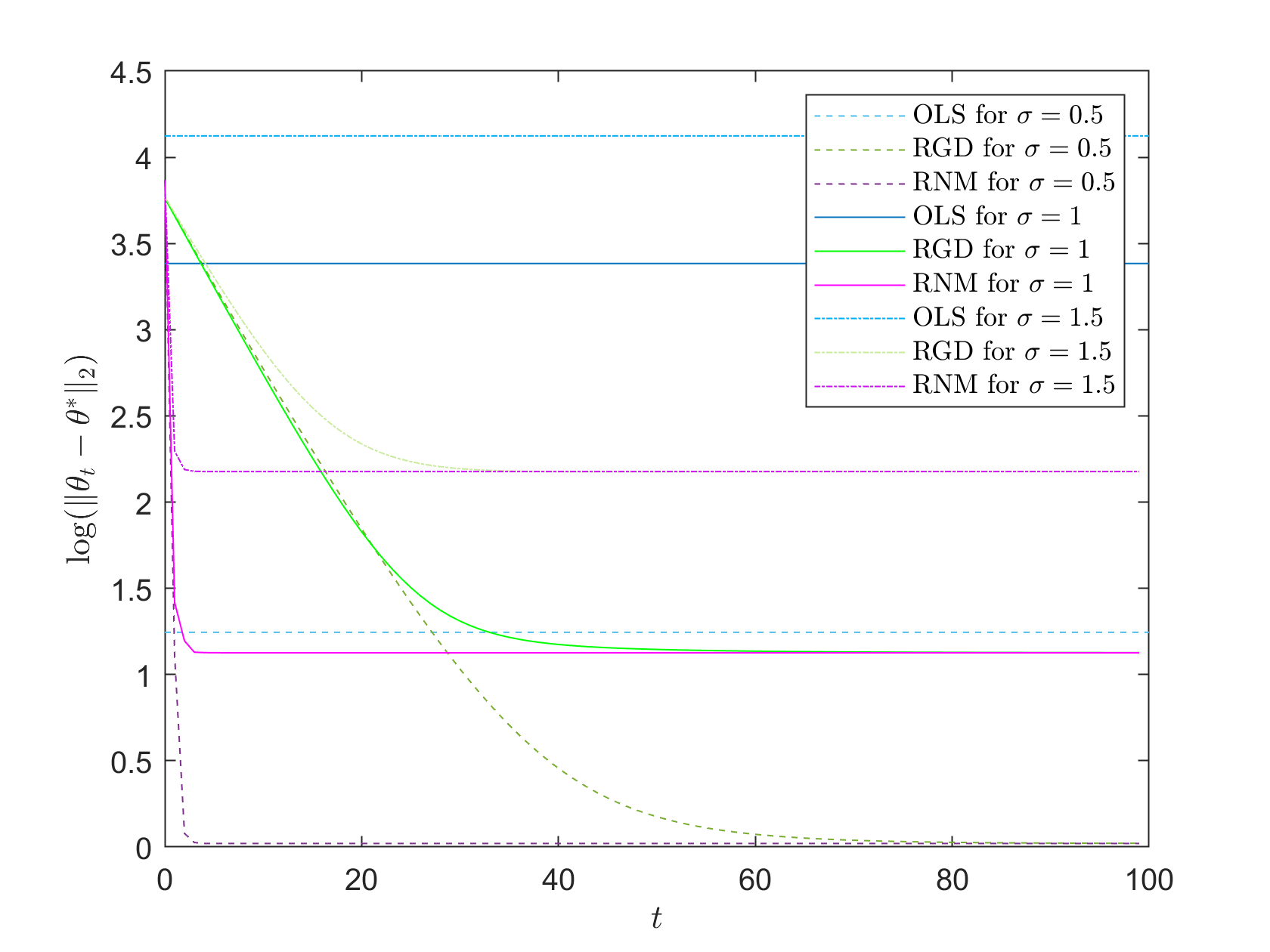}
\caption{Error log($\| \theta_t - \theta^* \|_2$) with respect to each iteration for Robust Newton's Method (RNM), Robust Gradient Descent (RGD), and ordinary least squares (OLS), for linear regression with heavy-tailed data.}
\label{fig:linreg_3}
\end{figure}


\section{Robust Hessian Estimation: The Conjugate Gradient Approach}
\label{SecCG}

In this section, we discuss an alternative to Newton's method (and present a robust variant thereof) which does not involve explicitly computing the Hessian. Inspired by Martens~\cite{james2010}, the idea is to estimate $\D^2 f(\theta) v$, for any vector $v$, using the approximation
\begin{equation}
\label{EqnHV}
h_v(\theta)=\frac{\D f(\theta + \delta v) - \D f(\theta)}{\delta},
\end{equation}
for some small $\delta > 0$. Note that in order to compute the Newton step $\dx$, we need to solve the system $\D^2 f(\theta) \Delta\theta = -\D f(\theta)$, which we will do using the conjugate gradient algorithm, which provides an iterative method for solving a linear system of the form $Ax = b$~\cite[Chapter 5]{wright1999numerical}. Our robust approach will involve using the robust gradient estimate $g(\theta)$ in place of $\D f(\theta)$.



The details of the algorithm are provided in Algorithm~\ref{alg:CG Newton}. Note that we have specified that the CGNewtonStep subroutine for finding the Newton direction on each iteration of CGRobustNewton runs for $p$ steps, because in the noiseless case, the conjugate gradient method is known to terminate in at most $p$ steps.


\begin{algorithm}[H]
\centering
\caption{Conjugate Gradient Robust Newton's Method}\label{alg:CG Newton}
\begin{algorithmic}[1]
\Require Data samples $S = \{z_i\}_{i = 1}^n$, Number of iterations $T$, Initial guess $\theta_0\in \Theta$, Backtracking linesearch parameters $\kappa_1 \in (0, 0.5), \kappa_2 \in (0,1)$, and $\zeta$, Tolerance $\delta$

\State 

\Function{CGRobustNewton($S, \xi, \theta_0, , \kappa_1, \kappa_2, \zeta$)}{}

\For{$t=0$ to $T-1$}

\State Compute losses  $ \{ \mathcal{L}(\theta_t,z_i) \}_{i=1}^n$  and gradients $ \{ \nabla \mathcal{L}(\theta_t,z_i) \}_{i=1}^n$
\State Compute Newton step $\Delta \theta_{nt} =  \text{{\sc CGNewtonStep}}(\theta_t)$
\State Compute stepsize $\alpha = \text{{\sc BacktrackingLineSearch}}(S, \theta_t, \Delta \theta_{nt}, g(\theta_t), \kappa_1, \kappa_2, \zeta)$
\State Update $\theta_{t+1} = \theta_t +  \alpha \Delta \theta_{nt}$

\EndFor

\Return $\theta_T$
\EndFunction

\State

\Function{CGNewtonStep($\theta$)}{}
\State Randomly initialize $\Delta\theta^{(0)} \in \Theta $
\State Compute gradient estimate $g(\theta) = \text{{\sc RobustGradientEstimate}}(S, \theta)$
\State Compute Hessian-vector product estimate $h_{\Delta\theta^{(0)}}(\theta) = \textsc {HVProduct}(\theta, \Delta\theta^{(0)})$ 
\State Set $r_0 = h_{\Delta\theta^{(0)}}(\theta) + g(\theta)$
\State Set $p_0 = -r_0$
\For{$k=1$ to $p-1$}
\State Compute Hessian-vector product estimate $h_{p_k}(\theta) = \textsc {HVProduct}(\theta, p_k)$ 
\State Set $\alpha_k = \frac{r_k^T r_k}{p_k^T h_{p_k}(\theta)}$
\State Set $\Delta\theta^{(k+1)}  = \Delta\theta^{(k)} + \alpha_k p_k$
\State Set $r_{k+1} = r_k + \alpha_k h_{p_k}(\theta)$
\State Set $\beta_{k+1} = \frac{r_{k+1}^T r_{k+1}}{r_k^T r_k}$
\State Set $p_{k+1} = -r_{k+1} + \beta_{k+1}p_k$
\EndFor

\Return $\Delta\theta^{(d)}$
\EndFunction

\State

\Function{HVProduct($\theta, v$)}{}
\State Compute gradient estimate $g(\theta) = \text{{\sc RobustGradientEstimate}}(S, \theta)$
\State Compute gradient estimate $g(\theta+\delta v) = \text{{\sc RobustGradientEstimate}}(S, \theta+\delta v)$

\Return $\frac{g(\theta+\delta v) - g(\theta)}{\delta}$

\EndFunction

\State

\Function{BacktrackingLineSearch($(S, \theta, \Delta \theta_{nt}), g(\theta), \kappa_1, \kappa_2, \zeta$)}{}

\State Set $\alpha = 1$
\While {$\text{{\sc RobustEstimate}} ( \{ \mathcal{L}(\theta +\alpha \Delta \theta_{nt}, z_i) \}_{i=1}^n) >  \text{{\sc RobustEstimate}} ( \{ \mathcal{L}(\theta, z_i) \}_{i=1}^n) + \kappa_1 \alpha g(\theta) \Delta\theta_{nt} + \zeta$}
\State Update $\alpha = \kappa_2 \alpha$
\EndWhile

\Return $\alpha$

\EndFunction

\end{algorithmic}
\end{algorithm}

\subsection{Convergence}

We sketch some ideas here; a rigorous proof giving rates of convergence of the robust conjugate gradient method is beyond the scope of this work. Focusing on the pure Newton phase, note that our analysis of the iterates of robust Newton's method essentially hinges on the Newton step $\dx$ satisfying the equation
\begin{equation}
\label{EqnNewtonStep}
\nabla f(\theta_t) = -\nabla^2 f(\theta_t) \dx + \chi_t,
\end{equation}
where the next iterate is then defined by $\theta_{t+1} = \theta_t + \dx$ and $\chi_t$ is a small, bounded error (cf.\ inequalities~\eqref{EqnExpansion} and~\eqref{EqnExpansion2}). In particular, we can bound $\chi_t$ using the fact that $\dx = -H(\theta_t)^{-1} g(\theta_t)$, and $\|g(\theta_t) - \nabla f(\theta_t)\|_2$ and $\|H(\theta_t) - \nabla^2 f(\theta_t)\|_2$ are small. In the case of the robust conjugate gradient method, we can again think of the conjugate gradient method as providing an approximate solution of the form
\begin{equation}
\label{EqnNewtonStep}
\nabla f(\theta_t) = -\nabla^2 f(\theta_t) \dxtil + \widetilde{\chi}_t,
\end{equation}
where successive iterates are then defined by $\thetatil_{t+1} = \thetatil_t + \dxtil$. Thus, the main challenge is to understand the propagation of errors when the conjugate gradient method is applied to solve the system $Ax = b$, but the matrix-vector pair $(A,b)$ is replaced by $(\widetilde{A}, \widetilde{b})$ on each iteration. To the best of our knowledge, this is actually an open question in optimization~\cite{greenbaum1989behavior, greenbaum1992predicting}. We note, however, that since our ultimate statistical estimation error bounds are all up to a small radius of, e.g., $O(\sqrt{\epsilon})$, we only need the output of the conjugate gradient method to be correct up to this error. In particular, as it is known that the exact conjugate gradient method terminates after $p$ steps~\cite{{wright1999numerical}}, it would for instance suffice to show that an inexact conjugate gradient method, where the error of $(\widetilde{A}, \widetilde{b})$ is also $O(\sqrt{\epsilon})$, only accumulates $O(\sqrt{\epsilon})$ error after $p$ steps.
Alternatively, one could try to derive a geometric rate of convergence (cf.\ equation (5.36) of Nocedal and Wright~\cite{wright1999numerical}), with an additional additive error term, for inexact conjugate gradient steps. Clearly, each iterate of the conjugate gradient method has computational complexity $O(p^2)$, since it involves a small handful of matrix/vector multiplications. Thus, the overall complexity of $p$ iterations would be $O(p^3)$, as well, leading to a computational complexity of $O(Tp^3)$ when combined with Newton's method. 


We also need to quantify the error terms introduced to conjugate gradient steps due to inexactness. This depends on the increment $\delta$ used in the finite-difference approximation of the Hessian term~\eqref{EqnHV}. Note that by a Taylor expansion, we have
\begin{equation*}
\D f(\theta + \delta v) = \D f(\theta) + \delta \D^2 f(\theta) v + C \delta^2,
\end{equation*}
for some constant $C$.
Thus, we have the error bounds
\begin{align*}
\|h_v(\theta) - \nabla^2 f(\theta) v\|_2 & = \left\|\frac{g(\theta + \delta v)-g(\theta)}{\delta} - \nabla^2 f(\theta) v\right\|_2 \\
& = \left\|\frac{g(\theta + \delta v)-g(\theta)}{\delta} - \frac{\nabla f(\theta + \delta v) - \nabla f(\theta)}{\delta} - C\delta\right\|_2 \\
& \le \frac{\|g(\theta + \delta v) - \nabla f(\theta + \delta v)\|_2}{\delta} + \frac{\|g(\theta) - \nabla f(\theta)\|_2}{\delta} + C\delta.
\end{align*}
If we had deviations bounds of the form~\eqref{eqn:Hess_estimator}, e.g., with $\alpha_g, \beta_g \asymp \sqrt{\epsilon}$, the optimal choice of $\delta$ would be $\delta \asymp \epsilon^{1/4}$.

In summary, we conjecture that the robust conjugate gradient method would allow us to incur an overall estimation error of $O(\epsilon^{1/4})$ in the case of Huber's $\epsilon$-contamination model, again at a quadratic convergence rate for the successive Newton iterates. Although this is a slower rate than the one derived in Section~\ref{SecApps} for GLMs, it may be applicable to a much wider range of settings. We also note that for the SEVER algorithm~\cite{diakoconf2019}, a rate of $O(\epsilon^{1/4})$ is derived for empirical risk minimization for a class of classification problems. If the above discussion could be made rigorous, it would also be extendable to the heavy-tailed setting in a straightforward manner.

\subsection{Simulations}

In Figure~\ref{fig:linreg_2}, we compare the Newton Conjugate Gradient Method (NCGM), Robust Gradient Descent (RGD), and ordinary least squares (OLS) on a linear model with Huber $\epsilon$-contaminated data, with the same setup as in Section~\ref{SecSimLin}. We used the initial parameter $\theta_0 = (1, \dots, 1) + 2w$, with $w \sim N(0, I_p)$, for both NCGM and RGD. For NCGM, we used the backtracking linesearch parameters $\kappa_1=0.01, \kappa_2=0.5$, and $\zeta=0.001$. We also used $\delta=10^{-9}$ for the estimation of Hessian-vector products. For RGD, we used stepsize $\eta=0.02$. We repeated the algorithm two times, with contamination fractions $\epsilon=0.01$ and $0.02$.

\begin{figure}[ht]
\centering
\includegraphics[scale=0.5]{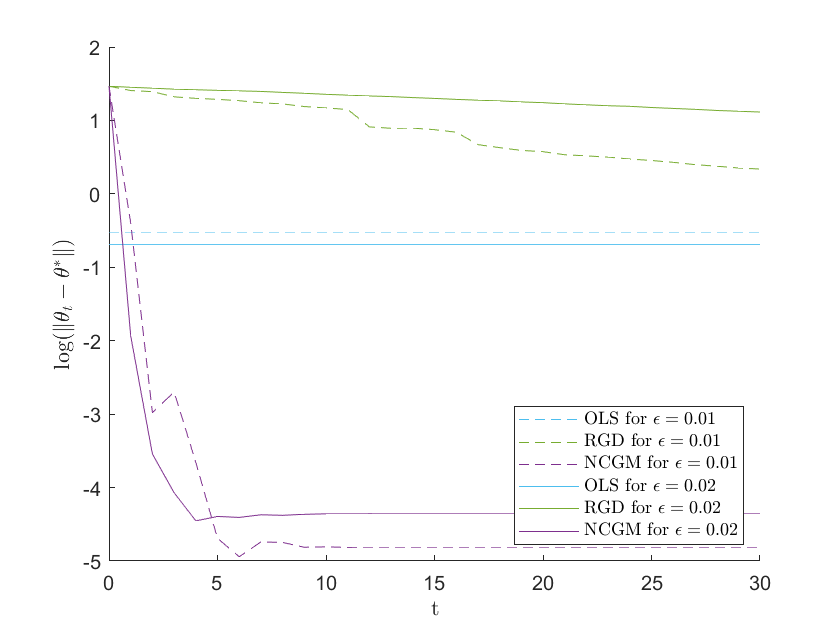}
\caption{Error log($\| \theta_n - \theta^* \|_2$) with respect to each iteration for the Newton Conjugate Gradient Method (NCGM), Robust Gradient Descent (RGD), and ordinary least squares (OLS), for linear regression with Huber contamination.}
\label{fig:linreg_2}
\end{figure}

In Figure~\ref{fig:linreg_4}, we compare the Newton Conjugate Gradient Method (NCGM), Robust Gradient Descent (RGD), and ordinary least squares (OLS) on a linear model with heavy-tailed data, again with the same setup as in Section~\ref{SecSimLin}. We used the initial parameter $\theta_0=(1.5, \dots, 1.5) + 2w$, with $w \sim N(0, I_p)$, for both NCGM and RGD. For NCGM, we used the backtracking linesearch parameters $\kappa_1=0.01, \kappa_2=0.5$, and $\zeta=0.00001$. We also used $\delta=10^{-10}$ for the estimation of Hessian-vector products. For RGD, we used stepsize $\eta=0.2$. We repeated the algorithm twice with $\sigma=0.5$ and $0.25$, with $\beta=0.7$.

\begin{figure}[ht]
\centering
\includegraphics[scale=0.5]{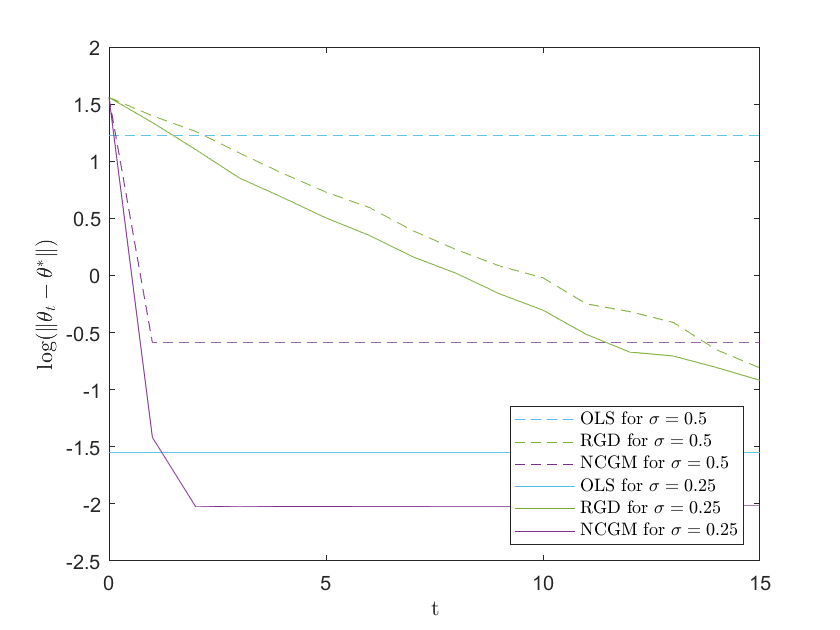}
\caption{Error log($\| \theta_n - \theta^* \|_2$) with respect to each iteration for the Newton Conjugate Gradient Method (NCGD), Robust Gradient Descent (RGD), and ordinary least squares (OLS), for linear regression with heavy-tailed data.}
\label{fig:linreg_4}
\end{figure}

In Figure~\ref{fig:logreg_2}, we compare the Newton Conjugate Gradient Method (NCGM) and Robust Gradient Descent (RGD) on a logistic model with Huber $\epsilon$-contamination. To generate the contaminated logistic data, we used the same procedure outlined in Section~\ref{subsec: simul log reg}. We also used the same hyperparameters for NCGM and RGD as in Section~\ref{subsec: simul log reg}.

\begin{figure}[ht]
\centering
\includegraphics[scale=0.25]{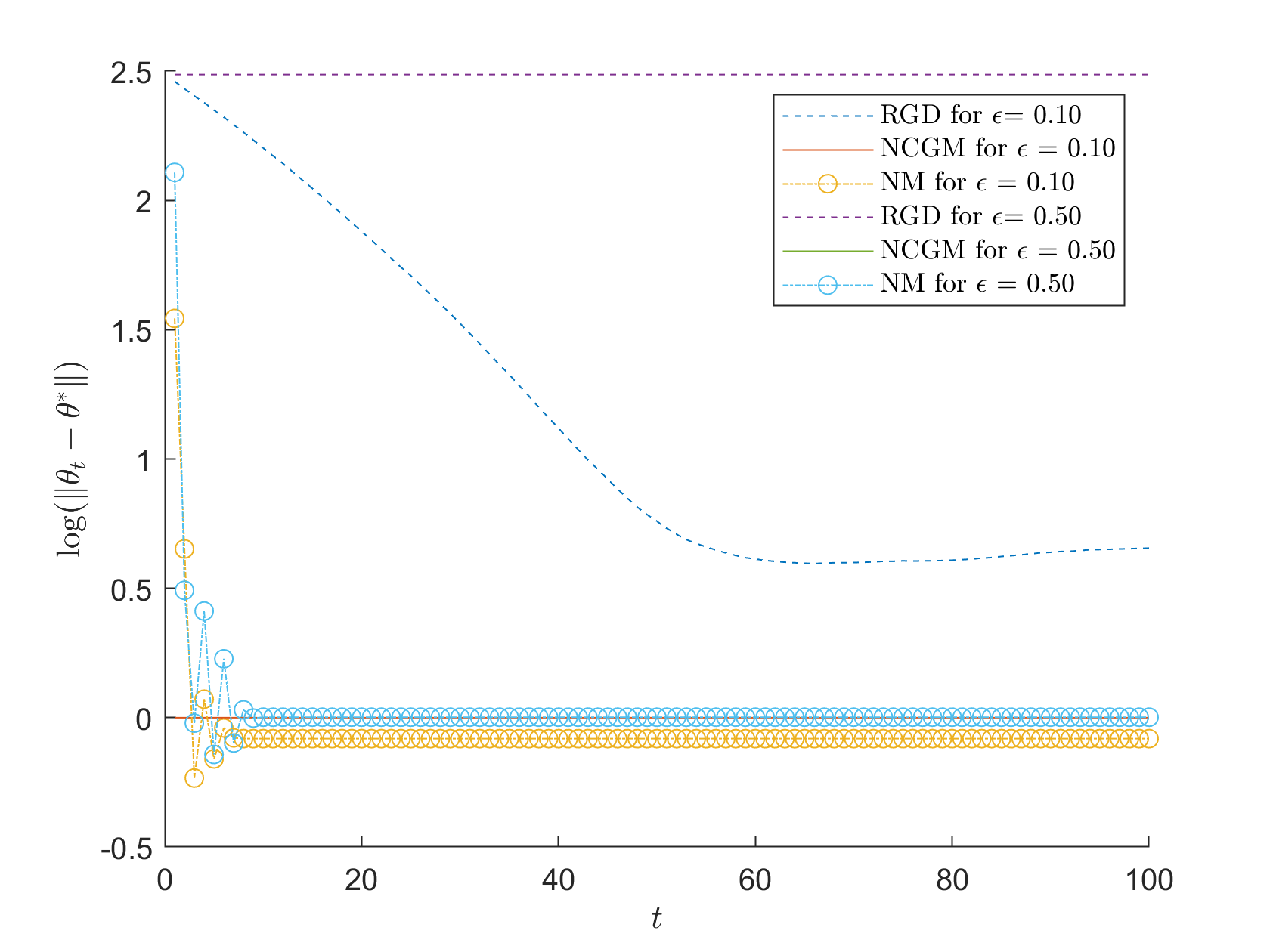}
\caption{Error log($\| \theta_n - \theta^* \|_2$) with respect to each iteration for the Newton Conjugate Gradient Method (NCGM) and Robust Gradient Descent (RGD), for logistic regression with Huber contamination. The behavior of the non-robust optimizer, found using Newton's method (NM), is also shown for reference.}
\label{fig:logreg_2}
\end{figure}



\section{Discussion}
\label{SecDiscussion}

We have presented a novel second-order method for robust parameter estimation, based on an adaptation of Newton's method where gradients and Hessians are computed in a robust manner on each iteration. In particular, we have shown that a variant of the backtracking linesearch algorithm will adaptively choose stepsizes in such a way that a finite number of iterates initially lie in a ``damped" phase of the algorithm, after which the algorithm enters a ``pure" phase where it only chooses stepsizes equal to 1 and converges quadratically to a small ball around the true parameter.

Under appropriate assumptions, our method shows clear computational advantages, both theoretically and empirically, in comparison to previously analyzed first-order methods. However, the general statements of Theorems~\ref{ThmGLMHuber} and~\ref{ThmGLMHeavy} leave much to be desired in terms of their dependence on $p$; with an infinite computational budget, the rates from robust gradient descent depend only logarithmically on $p$ in the Huber contamination model. It is thus natural to wonder whether improvements exist which either involve using less naive methods for computing robust Hessian matrices~\cite{minsker2018sub, minsker2022robust, cheng2019faster} or a robust variant of a quasi-Newton method, which does not even attempt to estimate the Hessian matrices as closely~\cite{wright1999numerical}. This opens up an interesting question of the ``right" type of Hessian estimate which interpolates between robust gradient descent and the robust Newton method presented here, achieving the smallest number of iterations necessary for a desired level of statistical accuracy.

Another plausible extension of our analysis that could be studied under a similar theoretical framework would be to use robust gradient and Hessian estimators which employ the estimation procedures of Diakonikolas et al.~\cite{diakonikolas2019recent} rather than those of Lai et al.~\cite{lai2016}; we note that this would allow us to also handle the setting of adversarially contaminated data, rather than i.i.d.\ data from either an $\epsilon$-contaminated or heavy-tailed model. We also note that the underlying assumption in our paper and all the aforementioned papers is that the clean data are drawn i.i.d.\ from a distribution. As pointed out during the review process, extensions to the heteroscedastic case would be quite fascinating, but are beyond the scope of our current work. Even in the case of univariate mean estimation with no contamination, the analysis quickly becomes quite complicated~\cite{pensia2022estimating}.

It would also be interesting and practically important to devise robust second-order algorithms appropriate for higher-dimensional data. For moderate to large $p$ (even in settings where $p < n$), implementing the robust version of Newton's method can become more tedious, since it involves robustly computing $p \times p$ matrices and then inverting them on each iteration. In the truly high-dimensional case ($p > n$), even the canonical version of Newton's method must be modified, since the Hessian matrix becomes rank-deficient. This raises the question of whether it would be beneficial to analyze a robust inexact second-order algorithm, instead, where the Hessian matrix need not be approximated as closely. In the truly high-dimensional setting, combining this with regularization would be a natural direction for future work.

Finally, we have proposed the robust conjugate gradient method as an alternative second-order algorithm which, though based on Newton's method, only requires computing robust gradients rather than needing to separately compute robust Hessians. This method could potentially enjoy the fast convergence benefits of Newton's method while bypassing some of the computational issues in higher dimensions. However, a rigorous analysis of the robust conjugate gradient method is beyond the current scope of this paper---in particular, it would involve carefully tracking the propagation of errors through iterates of the conjugate gradient method, which has remained a long-standing open problem. We note that any error bounds on successive conjugate gradient iterates could then easily be plugged into our proofs to obtain quadratic convergence to an appropriate ball around the true parameter.


\appendix

\section{Proofs of Optimization-Theoretic Results}
\label{SecOptProofs}

We now provide the proofs of the convergence results stated in Section~\ref{SecGeneral}.

\subsection{Proof of Theorem~\ref{ThmPure}}
\label{AppThmPure}

Our first step is to show that backtracking linesearch chooses unit steps whenever the gradient is small, i.e., $\|\nabla f(\theta_t)\|_2 < \eta$. In other words, we want to prove that
\begin{equation*}
\tilde{f}(\theta+\dx) \leq \tilde{f}(\theta) - \kappa_1 \tilde{\lambda}(\theta)^2 + \zeta,
\end{equation*}
where $\theta = \theta_t$ denotes the iterate, $\tilde{f}$ denotes the robust estimate of $f$, and we have defined the noisy Newton decrement
\begin{equation}
\label{EqnNewtonDec}
\tilde{\lambda}(\theta) := \left(g(\theta)^T H^{-1}(\theta) g(\theta)\right)^{1/2}.
\end{equation}
Recall that $\dx = -H(\theta_t)^{-1} g(\theta_t)$.
Note that since $\|\nabla f(\theta_t)\|_2 < \eta$, we have~\cite[Equation (9.11)]{boyd2004}
\begin{equation}
\label{EqnOptErr}
\|\theta_t - \theta^*\|_2 \le \frac{2}{m} \|\nabla f(\theta_t)\|_2 < \frac{2\eta}{m} := \gamma_0.
\end{equation}
In particular, this implies a bound of $\gamma_g := \alpha_g \gamma_0 + \beta_g$ on the error of the gradient, and a bound of $\gamma_h := \alpha_h \gamma_0 + \beta_h$ on the error of the Hessian, according to Definitions~\ref{defn: robust gradient estimator} and~\ref{defn: robust Hessian estimator}.
We will show that
\begin{equation}
\label{EqnExit}
f(\theta+\dx) \leq f(\theta) - \kappa_1 \tilde{\lambda}(\theta)^2 + \frac{\zeta}{2},
\end{equation}
from which the desired result clearly follows by the accuracy bound~\eqref{EqnMedianErr} on the robust estimates and the triangle inequality.

Note that $\tilde{\lambda}(\theta)^2 = \Delta \theta_{nt}^T H(\theta) \Delta \theta_{nt}$, implying that
\begin{equation}
\label{EqnLambdaLower}
\tilde{\lambda}(\theta)^2 \geq (m-\gamma_h) \| \Delta\theta_{nt} \|_2^2 > \frac{m}{2} \|\Delta\theta_{nt} \|_2^2
\end{equation}
(where we assume $\gamma_h \le \frac{m}{2}$). Furthermore, by the Lipschitz condition, for $u \ge 0$, we have
\begin{equation*}
\|\nabla^2 f(\theta + u \dx) - \nabla^2 f(\theta)\|_2 \le uL\|\dx\|_2,
\end{equation*}
so
\begin{equation}
\label{EqnOrange}
\left|\dx^T \left(\nabla^2 f(\theta + u \dx) - \nabla^2 f(\theta)\right) \dx\right| \le u L\|\dx\|_2^3.
\end{equation}
Defining $\fbar(u) := f(\theta + u\dx)$, we have $\fbar''(u) = \dx^T \nabla^2 f(\theta + u\dx) \dx$, so we can rewrite inequality~\eqref{EqnOrange} as
\begin{equation*}
|\fbar''(u) - \fbar''(0)| \le uL\|\dx\|_2^3,
\end{equation*}
implying that
\begin{equation*}
\fbar''(u) \le \fbar''(0) + uL\|\dx\|_2^3 \le \fbar''(0) + uL\left(\frac{2}{m}\right)^{3/2} \tilde{\lambda}(\theta)^3,
\end{equation*}
using inequality~\eqref{EqnLambdaLower}. Integrating with respect to $u$ gives
\begin{equation*}
\fbar'(u) \le \fbar'(0) + u\fbar''(0) + \frac{u^2L}{2}\left(\frac{2}{m}\right)^{3/2} \tilde{\lambda}(\theta)^3,
\end{equation*}
and a second integration gives
\begin{equation}
\label{EqnThird}
\fbar(u) \le \fbar(0) + u\fbar'(0) + \frac{u^2}{2}\fbar''(0) + \frac{u^3L}{6}\left(\frac{2}{m}\right)^{3/2} \tilde{\lambda}(\theta)^3.
\end{equation}

Now note that
\begin{align}
\label{EqnFirst}
\fbar'(0) & = \nabla f(\theta)^T \dx \notag \\
& = -\nabla f(\theta)^T H^{-1}(\theta) g(\theta) \notag \\
& = -\tilde{\lambda}(\theta)^2 + (g(\theta) - \nabla f(\theta))H^{-1}(\theta) g(\theta) \notag \\
& \le -\tilde{\lambda}(\theta)^2 + \gamma_g \frac{1}{m - \gamma_h} \left(\|\nabla f(\theta)\|_2 + \gamma_g\right) \notag \\
& \le -\tilde{\lambda}(\theta)^2 + \gamma_g \frac{1}{m - \gamma_h} \left(\eta + \gamma_g\right),
\end{align}
whereas
\begin{align}
\label{EqnSecond}
\fbar''(0) & = \dx^T \nabla^2 f(\theta) \dx \notag \\
& = \tilde{\lambda}(\theta)^2 + \dx^T \left(\nabla^2 f(\theta) - H(\theta)\right) \dx \notag \\
& \le \tilde{\lambda}(\theta)^2 + \gamma_h \|\dx\|_2^2 \notag \\
& \le \tilde{\lambda}(\theta)^2\left(1 + \frac{2\gamma_h}{m}\right),
\end{align}
using the bound~\eqref{EqnLambdaLower} in the last inequality. Plugging inequalities~\eqref{EqnFirst} and~\eqref{EqnSecond} into inequality~\eqref{EqnThird} (with $u = 1$) then gives
\begin{equation*}
f(\theta + \dx) \le f(\theta) + \left(-\tilde{\lambda}(\theta)^2 + \frac{\gamma_g(\eta + \gamma_g)}{m - \gamma_h}\right) + \frac{\tilde{\lambda}(\theta)^2}{2} \left(1 + \frac{2\gamma_h}{m}\right) + \frac{L}{6}\left(\frac{2}{m}\right)^{3/2} \tilde{\lambda}(\theta)^3.
\end{equation*}
Finally, note that
\begin{equation*}
\tilde{\lambda}(\theta) \le \|H^{-1/2}(\theta)\|_2 \cdot \|g(\theta)\|_2 \le \frac{\|\nabla f(\theta)\|_2 + \gamma_g}{\sqrt{m - \gamma_h}} \le \frac{\eta + \gamma_g}{\sqrt{m - \gamma_h}}.
\end{equation*}
Since
\begin{equation*}
\zeta \ge 2\left(\frac{\gamma_g \cdot 2\eta}{m/2} + \frac{\gamma_h}{m} \left(\frac{2\eta}{\sqrt{m/2}}\right)^2\right)
\end{equation*}
and using the assumptions $\gamma_h \le \frac{m}{2}$ and $\gamma_g \le \eta$, we then have
\begin{equation*}
f(\theta + \dx) \le f(\theta) - \tilde{\lambda}(\theta)^2\left(\frac{1}{2} - \frac{L}{6} \left(\frac{2}{m}\right)^{3/2} \tilde{\lambda}(\theta)\right) + \frac{\zeta}{2}.
\end{equation*}
In particular, if $\tilde{\lambda}(\theta) \le \frac{3-6\kappa_1}{L(2/m)^{3/2}}$, which is guaranteed if $\eta$ is chosen sufficiently small so that
\begin{equation*}
\frac{\eta + \gamma_g}{\sqrt{m - \gamma_h}} \le \frac{3-6\kappa_1}{L(2/m)^{3/2}},
\end{equation*}
then inequality~\eqref{EqnExit} is indeed satisfied. We can guarantee this last inequality by taking $\eta \le \frac{3m^2(1-2\kappa_1)}{8L}$, assuming $\gamma_h \le \frac{m}{2}$ and $\gamma_g \le \eta$.

To derive the geometric convergence rate~\eqref{EqnContract}, we will use induction. We first establish an inequality of the form
\begin{equation}
\label{EqnRecursion}
\| \D f(\theta + \dx) \|_2 \le c_1 \| \D f(\theta) \|_2^2 +c_2,
\end{equation}
assuming $\|\nabla f(\theta)\|_2 < \eta$. Note that
\begin{align}
\label{EqnExpansion}
\| \D f(\theta+ \dx) \|_2 & = \| \D f(\theta+ \dx) - g(\theta) - H(\theta) \dx \|_2 \notag \\
& \le\| \D f(\theta+ \dx) - \nabla f(\theta) - \nabla^2 f(\theta) \dx \|_2 + \gamma_g + \gamma_h \|\dx\|_2 \notag \\
& = \left\|\int_0^1 \left(\nabla^2 f(\theta + u\dx) - \nabla^2 f(\theta)\right) \dx du\right\|_2 + \gamma_g + \gamma_h \|\dx\|_2 \notag \\
& \le \frac{L}{2} \|\dx\|_2^2 + \gamma_g + \gamma_h \|\dx\|_2,
\end{align}
using the Lipschitz condition in the second inequality. Next, we use the bound
\begin{equation*}
\|\dx\|_2 = \|H^{-1}(\theta) g(\theta)\|_2 \le \frac{1}{m - \gamma_h} \left(\|\nabla f(\theta)\|_2 + \gamma_g\right) \le \frac{2}{m} \left(\|\nabla f(\theta)\|_2 + \gamma_g\right),
\end{equation*}
assuming $\gamma_h \le \frac{m}{2}$. Plugging back into inequality~\eqref{EqnExpansion} gives
\begin{align}
\label{EqnExpansion2}
\|\nabla f(\theta + \dx)\|_2 & \le \frac{L}{2} \left(\frac{2\left(\|\nabla f(\theta)\|_2 + \gamma_g\right)}{m}\right)^2 + \gamma_g + \gamma_h \left(\frac{2\left(\|\nabla f(\theta)\|_2 + \gamma_g\right)}{m}\right) \notag \\
& = \frac{2L}{m^2} \|\nabla f(\theta)\|_2^2 + \|\nabla f(\theta)\|_2 \left(\frac{4\gamma_g L}{m^2} + \frac{2\gamma_h}{m}\right) + \left(\frac{2L\gamma_g^2}{m^2} + \gamma_g + \frac{2\gamma_g \gamma_h}{m}\right) \notag \\
& \le \frac{2L}{m^2} \|\nabla f(\theta)\|_2^2 + \eta \left(\frac{4\gamma_g L}{m^2} + \frac{2\gamma_h}{m}\right) + \frac{2L\gamma_g^2}{m^2} + \gamma_g + \frac{2\gamma_g \gamma_h}{m},
\end{align}
giving inequality~\eqref{EqnRecursion} with $c_1 = \frac{2L}{m^2}$ and $c_2 = \eta \left(\frac{4\gamma_g L}{m^2} + \frac{2\gamma_h}{m}\right) + \frac{2L\gamma_g^2}{m^2} + \gamma_g + \frac{2\gamma_g \gamma_h}{m}$. In particular, $c_2$ can be made small if we choose $\gamma_g$ and $\gamma_h$ small enough, and we will assume that $c_2 \le \frac{\eta}{2}$. We will also assume that $c_1 c_2 \le \frac{1}{12}$.

We are now ready for our induction. Using the notation $y_t := c_1 \| \D f(\theta_t) \|_2$, we will prove that $y_t < c_1\eta$ and $y_t \le y_0^{2^t} + c_1 c_2$ for all $t \ge 1$. For the base case $t=1$, note that
\begin{equation*}
y_1 \le y_0^2 + c_1 c_2 = y_0^{2^1} + c_1 c_2,
\end{equation*}
using inequality~\eqref{EqnRecursion}. Furthermore, since $y_0 < c_1\eta < \frac{1}{2}$ and $c_2 \le \frac{\eta}{2}$ by assumption, we have
\begin{equation*}
y_1 \le \frac{y_0}{2} + \frac{c_1\eta}{2} < \frac{c_1\eta}{2} + \frac{c_1 \eta}{2} = c_1 \eta.
\end{equation*}
For the inductive step, suppose $t \ge 1$, and we have $y_s < c_1 \eta$ and $y_s \le y_0^{2^s} + 3c_1 c_2$ for all $s \le t$. Then by inequality~\eqref{EqnRecursion}, we have
\begin{equation}
\label{EqnQuadratic}
y_{t+1} \le y_t^2 + c_1c_2.
\end{equation}
Furthermore, using the assumption $\eta \le \frac{m^2}{4L}$, we have $y_t < \frac{1}{2}$, so if $c_2 \le \frac{\eta}{2}$, this implies that
\begin{equation*}
y_{t+1} \le \frac{y_t}{2} + \frac{c_1 \eta}{2} < \frac{c_1 \eta}{2} + \frac{c_1 \eta}{2} \le c_1\eta.
\end{equation*}
By inequality~\eqref{EqnQuadratic} and the induction hypothesis, we now write
\begin{align*}
y_{t+1} & \le y_t^2 + c_1 c_2 \\
& \le \left(y_0^{2^t} + 3c_1 c_2\right)^2 + c_1 c_2 \\
& = y_0^{2^{t+1}} + 6c_1 c_2 y_0^{2^t} + 9c_1^2 c_2^2 + c_1 c_2 \\
& \le y_0^{2^{t+1}} + \frac{3}{2} c_1 c_2 + \frac{3}{4} c_1 c_2 + c_1 c_2 \\
& \le y_0^{2^{t+1}} + 3c_1 c_2,
\end{align*}
using the assumption $c_1 c_2 \le \frac{1}{12}$. This completes the induction.

Thus,
\begin{equation*}
c_1 \|\nabla f(\theta_{t})\|_2 \le \left(\frac{1}{2}\right)^{2^{t}} + 3c_1 c_2.
\end{equation*}
Applying inequality~\eqref{EqnOptErr} then gives the convergence rate
\begin{equation*}
\|\theta_{t+1} - \theta^*\|_2 \le \frac{2}{m} \cdot \frac{m^2}{2L} \left(\left(\frac{1}{2}\right)^{2^{t+1}} + \frac{6L}{m^2} c_2\right),
\end{equation*}
completing the proof.


\subsection{Proof of Theorem~\ref{ThmDamped}}
\label{AppThmDamped}

First, we show that we have an upper bound $\gamma_0' := \frac{2}{m} \sqrt{2M\left(f(\theta_0) - f(\theta^*)\right)}$ on $\|\theta_t - \theta^*\|_2$. We can then translate this into upper bounds $\gamma_g' := \alpha_g \gamma_0' + \beta_g$ and $\gamma_h' := \alpha_h \gamma_0' + \beta_h$ on the gradient and Hessian deviations, respectively.

By the result of Theorem~\ref{ThmPure}, we must have $\|\nabla f(\theta_s)\|_2 \ge \eta$ for all $0 \le s \le t$. Indeed, suppose $\|\nabla f(\theta_s)\|_2 < \eta$ for some $s < t$. Then by Theorem 1 (with the iterate $\theta_s$ relabeled as $\theta_0$), all successive iterates $\theta_{s+1}, \theta_{s+2}, \dots$, including $\theta_t$, would also need to have the norm of the gradient bounded by $\eta$, which contradicts the assumption that $\|\nabla f(\theta_t)\|_2 \ge \eta$.
We now show by induction that:
\begin{enumerate}
\item $f(\theta_s) \le f(\theta_0)$, and
\item $\|\theta_s - \theta^*\|_2 \le \gamma_0'$,
\end{enumerate}
for all $0 \le s \le t$. For the base case $s = 0$, note that claim (1) is obvious. We can establish claim (2) by noting that
\begin{equation}
\label{EqnChain}
\|\theta_s - \theta^*\|_2 \le \frac{2}{m} \|\nabla f(\theta_s)\|_2 \le \frac{2}{m} \sqrt{2M(f(\theta_s) - f(\theta^*))} \le \frac{2}{m} \sqrt{2M(f(\theta_0) - f(\theta^*))} = \gamma_0',
\end{equation}
using inequality~\eqref{EqnOptErr}, inequality (9.14) of Boyd and Vandenberghe~\cite{boyd2004}, and claim (1).

Turning to the inductive step, suppose claims (1) and (2) hold for all $s \le s'$, where $0 \le s' < t$. We wish to establish the claims for $s = s' + 1$. Note that if we prove claim (1), then claim (2) follows by the same chain of inequalities~\eqref{EqnChain}. Thus, it remains to establish claim (1).

Assuming $\gamma_g' \le \frac{\eta}{2}$, we have $\|g(\theta_s)\|_2 \geq \frac{\eta}{2}$ for all $0 \le s \le s'$ by claim (2), the fact that $\|\nabla f(\theta_s)\|_2 \ge \eta$, and the triangle inequality.
Using the same notation for the Newton decrement~\eqref{EqnNewtonDec}, we note that
\begin{equation}
\label{EqnNewtonDec2}
\tilde{\lambda}(\theta_s)^2 \geq \frac{1}{\sqrt{M+\gamma_h'}} \| g(\theta_s) \|_2^2 \geq \frac{ \eta^2 }{4\sqrt{2M}},
\end{equation}
where we assume $\gamma_h' \le M$.

First, we will prove that the exit condition of the $BacktrackingLineSearch$ function will be satisfied, i.e., we want to prove that
\begin{equation}
\label{EqnBackTrack}
\tilde{f}(\theta_{s'}+\alpha\dx) \leq \tilde{f}(\theta) - \kappa_1 \alpha \tilde{\lambda}(\theta_{s'})^2 + \zeta
\end{equation}
holds for small enough $\alpha$, where $\tilde{f}$ is the robust estimate of $f$. For convenience, we use the notation $\theta = \theta_{s'}$ in what follows. In fact, we will show that
\begin{equation}
\label{EqnBackTrack2}
f(\theta + \alpha \dx) \leq f(\theta) - \kappa_1 \alpha \tilde{\lambda}(\theta)^2
\end{equation}
for small enough $\alpha$, which clearly then implies inequality~\eqref{EqnBackTrack} by the triangle inequality and the condition~\eqref{EqnMedianErr}.

Consider the following:
\begin{align*}
f(\theta + \alpha \dx) & \leq f(\theta) + \alpha \nabla f(\theta)^T \dx + \frac{M}{2} \| \dx\|_2^2 \alpha^2 \\
& \le f(\theta) + \alpha g(\theta) \dx + \alpha \gamma_g' \|\dx\|_2 + \frac{M}{2} \tilde{\lambda}(\theta)^2 \frac{2}{m} \alpha^2 \\
& = f(\theta) - \alpha \tilde{\lambda}(\theta)^2 + \alpha \gamma_g' \|\dx\|_2 + \frac{M}{m} \tilde{\lambda}(\theta)^2 \alpha^2 \\
& \le f(\theta) - \alpha \tilde{\lambda}(\theta)^2 + \alpha \gamma_g' \tilde{\lambda}(\theta) \sqrt{\frac{2}{m}} + \frac{M}{m} \tilde{\lambda}(\theta)^2 \alpha^2,
%
%
\end{align*}
where we use the relation $-\tilde{\lambda}(\theta)^2 = g(\theta)^T \Delta \theta_{nt}$ and inequality~\eqref{EqnLambdaLower}. Assuming $\gamma_g' \sqrt{\frac{2}{m}} \le \frac{1}{2} \sqrt{\frac{\eta^2}{4\sqrt{2M}}}$ and using inequality~\eqref{EqnNewtonDec2}, the last expression is upper-bounded as
\begin{align}
\label{EqnLS}
f(\theta + \alpha \dx) & \le f(\theta) - \frac{\alpha}{2} \tilde{\lambda}(\theta)^2 + \frac{M}{m} \tilde{\lambda}(\theta)^2 \alpha^2 \notag \\
& =  f(\theta) - \tilde{\lambda}(\theta)^2 \alpha \left(\frac{1}{2} - \frac{M}{m} \alpha \right).
\end{align}
%
%
Hence, the condition~\eqref{EqnBackTrack2} is indeed satisfied for sufficiently small $\alpha$, i.e., $\alpha \le \frac{m}{M}\left(\frac{1}{2} - \kappa_1\right)$, and in particular, the linesearch procedure must return a stepsize satisfying $\alpha \ge \kappa_2 \frac{m}{M}\left(\frac{1}{2} - \kappa_1\right)$. Plugging such a stepsize into inequality~\eqref{EqnBackTrack}, we have
\begin{equation}
\label{EqnLineSearch}
f(\theta_{s'+1}) \leq f(\theta_{s'})  - \kappa_1 \cdot \kappa_2 \frac{m}{M} \left(\frac{1}{2} - \kappa_1\right) \cdot \frac{\eta^2}{4\sqrt{2M}} + \zeta \le f(\theta_0) - \gamma + \zeta \le f(\theta_0) - \frac{\gamma}{2},
\end{equation}
using inequality~\eqref{EqnNewtonDec2} and the induction hypothesis. This implies that claim (1) is true, completing the induction.

Finally, note that the inequality $f(\theta_{t+1}) - f(\theta_t) < -\frac{\gamma}{2}$ follows by the same argument in inequality~\eqref{EqnLineSearch} with $\theta = \theta_t$, completing the proof.

\section{Proofs of Auxiliary Results for GLMs}

In this appendix, we prove some auxiliary results appearing in Section~\ref{SecApps}.


\subsection{Proof of Lemma~\ref{lem: Hess est GLM}}
\label{AppLemHessGLM}

From the definition of the loss function~\eqref{eq:loss GLM}, we have $\E[\D^2 \cL(\theta,z)] = \E[\Phi''(x_i^T\theta)x_ix_i^T]$.
By our assumptions on the boundedness of $\Phi''$ and bounded eighth moments of $x_i$, we see that the distribution of the flattened Hessian $\flatten(\D^2\cL(\theta, z))$ has bounded fourth moments. We then write
\begin{align*}
    \tr(\Cov(\flatten(\D^2 \cL(\theta,z)))) & = \tr(\Cov(\flatten(\Phi''(x_i^T\theta)x_ix_i^T)))\\
    &\leq \tr(\E[\flatten(\Phi''(x_i^T\theta)x_ix_i^T)\flatten(\Phi''(x_i^T\theta)x_ix_i^T)^T])\\
    &\leq \overline{M}_{\Phi, 2}^2 \tr(\E[\flatten(x_ix_i^T)\flatten(x_ix_i^T)^T])\\
    &= \overline{M}_{\Phi, 2}^2 \sum_{j,k=1}^p \E[x_{ij}^2x_{ik}^2]\\
    &\leq \overline{M}_{\Phi, 2}^2 \E\left[ \left(\sum_{j=1}^p x_{ij}\right)^4 \right]\\
    &\leq \overline{M}_{\Phi, 2}^2 \E\left[ \left(x_i^T 1\right)^4 \right],
\end{align*}
where $1$ denotes the all-ones vector. Finally, note that
\begin{align*}
\E\left[ \left(x_i^T 1\right)^4 \right] &\leq \widetilde{C}_4 \E\left[ \left(x_i^T 1\right)^2 \right]^2\\
    &\leq \widetilde{C}_4p^2 \|\Sigma_x\|_2^2,
\end{align*}
implying the desired result.


\subsection{Proof of Proposition~\ref{prop: L, m, M}}
\label{AppPropLM}

For the Lipschitz condition, note that for any $\theta_1, \theta_2\in \real^p$, we have
\begin{align*}
    \| \nabla^2 \cR(\theta_1) - \nabla^2 \cR(\theta_2) \|_2 &=\| \nabla^2 \cL(\theta_1, z) - \nabla^2 \cL(\theta_2, z) \|_2\\
    &= \left\| \E\left[ x_ix_i^T \left( \Phi''(x_i^T\theta_1) - \Phi''(x_i^T\theta_2) \right) \right] \right\|_2\\
    &= \sup_{u\in \mathbb{S}^{p-1}} u^T \E\left[ x_ix_i^T \left( \Phi''(x_i^T\theta_1) - \Phi''(x_i^T\theta_2) \right) \right]u\\
    &= \sup_{u\in \mathbb{S}^{p-1}}  \E\left[ (u^T x_i)^2 \left( \Phi''(x_i^T\theta_1) - \Phi''(x_i^T\theta_2) \right) \right]\\
    &\leq  \sup_{u\in \mathbb{S}^{p-1}}  \E\left[ (u^T x_i)^4\right]^{\frac{1}{2}}\E\left[ \left( \Phi''(x_i^T\theta_1) - \Phi''(x_i^T\theta_2) \right)^2 \right]^\frac{1}{2}\\
    &\leq \sup_{u\in \mathbb{S}^{p-1}}  \sqrt{\widetilde{C}_4}\E\left[ (u^T x_i)^2\right] \sqrt{\overline{M}_{\Phi, 3}} \| \theta_1 - \theta_2\|_2\\
    &\leq \sqrt{\widetilde{C}_4 \overline{M}_{\Phi, 3}}\|\Sigma_x\|_2 \| \theta_1 - \theta_2\|_2,
\end{align*}
where we use the mean value theorem to upper-bound the expectation in the second-to-last inequality.

For any $\theta\in \real^p$, we have
\begin{align*}
    \|\nabla^2 \cR(\theta) \|_2
    &=\| \nabla^2 \cL(\theta, z) \|_2\\
    &= \left\| \E\left[ x_ix_i^T \left( \Phi''(x_i^T\theta)\right) \right] \right\|_2\\
    &= \sup_{u\in \mathbb{S}^{p-1}} u^T \E\left[ x_ix_i^T \left( \Phi''(x_i^T\theta) \right) \right]u\\
    &= \sup_{u\in \mathbb{S}^{p-1}}  \E\left[ (u^T x_i)^2 \left( \Phi''(x_i^T\theta) \right) \right]\\
    &\leq  \sup_{u\in \mathbb{S}^{p-1}}  \E\left[ (u^T x_i)^4\right]^{\frac{1}{2}}\E\left[ \left( \Phi''(x_i^T\theta) \right)^2 \right]^\frac{1}{2}\\
    &\leq \sup_{u\in \mathbb{S}^{p-1}}  \sqrt{\widetilde{C}_4}\E\left[ (u^T x_i)^2\right] \overline{M}_{\Phi, 2}\\
    &\leq \overline{M}_{\Phi, 2}\sqrt{\widetilde{C}_4 }\|\Sigma_x\|_2.
\end{align*}


\subsection{Proof of Proposition~\ref{PropConvexity}}
\label{AppPropConvexity}

Suppose $v \in \real^p$ is a unit vector. We write
\begin{align*}
v^T \nabla^2 \cR(\theta) v & = \E\left[(v^T x_i)^2 \cdot \Phi''(x_i^T \theta)\right] \\
& \ge \E\left[(v^T x_i)^2 \cdot b_\tau 1\{|x_i^T \theta| \le \tau\}\right] \\
& = b_\tau \left(\E\left[(v^T x_i)^2\right] - \E\left[(v^T x_i)^2 \cdot 1\{|x_i^T \theta| > \tau\}\right]\right) \\
& \ge b_\tau \left(\lambda_{\min}(\Sigma_x) - \sqrt{\E\left[(v^T x_i)^4\right] \cdot \mprob\left(|x_i^T \theta| > \tau\right)}\right) \\
& \ge b_\tau \left(\lambda_{\min}(\Sigma_x) - \sqrt{\widetilde{C}_4 \|\Sigma_x\|_2^2 \cdot \mprob\left(|x_i^T \theta| > \tau\right)}\right) \\
& \ge \frac{b_\tau}{2} \lambda_{\min}(\Sigma_x),
\end{align*}
where we have used the fact that $\Phi''$ is always nonnegative in the first inequality, applied Cauchy-Schwarz in the second inequality, and used the assumption~\eqref{EqnXCond} in the last inequality.


\section{Proofs about Huber Contamination}
\label{AppHuber}

In this appendix, we provide proofs of the results stated in Section~\ref{sec: app huber}.


\subsection{Bounds on Error Terms}

Our first lemma shows how small the parameters $(\alpha_g, \beta_g, \alpha_h, \beta_h)$ in the robust gradient and Hessian estimates need to be in order to satisfy the assumptions of Theorems~\ref{ThmPure} and~\ref{ThmDamped}.

\begin{lemma*}
\label{lem: thm asmps}
Define
\begin{align*}
    \widehat{\alpha}_g &\defn \min\left\{\frac{m}{64}, \frac{\eta m}{8\sqrt{2M(f(\theta_0)-f(\theta^*))}}, \sqrt{\frac{\eta^2 m}{8\sqrt{2M}}}.\frac{m}{8\sqrt{2M(f(\theta_0)-f(\theta^*))}} \right\},\\
    \widehat{\beta}_g &\defn \min\left\{\frac{\eta}{32}, \frac{\eta m}{8\sqrt{2M(f(\theta_0)-f(\theta^*))}}, \frac{1}{4}\sqrt{\frac{\eta^2 m}{8\sqrt{2M}}} \right\},\\
    \widehat{\alpha}_h &\defn \min\left\{\frac{m^2}{256\eta}, \frac{mM}{4\sqrt{2M(f(\theta_0)-f(\theta^*))}} \right\},\\
    \widehat{\beta}_h &\defn \min\left\{\frac{m}{128}, \frac{M}{2} \right\}.
\end{align*}
Suppose $\alpha_g \leq \widehat{\alpha}_g$, $\beta_g \leq \widehat{\beta}_g$, $\alpha_h \leq \widehat{\alpha}_h$, and $\beta_h \leq \widehat{\beta}_h$. Then the bounds~\eqref{EqnGammaBd} and \eqref{eq: thmpure asmp c2} of Theorem~\ref{ThmPure}, as well as the bounds~\eqref{eq: thmdamped asmp1} and \eqref{eq: thmdamped asmp2} of Theorem~\ref{ThmDamped}, are satisfied.
\end{lemma*}

\begin{proof}
Under the assumptions, we have
\begin{align*}
    \gamma_g = \frac{2\eta \alpha_g}{m} + \beta_g  & \leq \frac{\eta}{32} + \frac{\eta}{32} = \frac{\eta}{16}, \\
    \gamma_h = \frac{2\eta \alpha_h}{m} + \beta_h & \le \frac{m}{128} +  \frac{m}{128} = \frac{m}{64}, \\
    \frac{2\alpha_g}{m} \sqrt{2M\left(f(\theta_0) - f(\theta^*)\right)} + \beta_g & \le \min\left\{\frac{\eta}{2},  \sqrt{\frac{m}{2}} \cdot \frac{1}{2} \sqrt{\frac{\eta^2}{4\sqrt{2M}}}\right\},\\
    \frac{2\alpha_h}{m} \sqrt{2M\left(f(\theta_0) - f(\theta^*)\right)} + \beta_h & \le M.
\end{align*}
Hence, inequalities~\eqref{EqnGammaBd}, \eqref{eq: thmdamped asmp1}, and \eqref{eq: thmdamped asmp2} are satisfied. 
Using the fact that $\eta \defn \frac{m^2}{8L} \cdot \min\left\{3(1-2\kappa_1), 2 \right\} \leq \frac{m^2}{4L}$, we have $\frac{L}{m^2} \leq \frac{1}{4\eta}$. Then
\begin{align*}
    c_2 &= \eta \left(\frac{4\gamma_g L}{m^2} + \frac{2\gamma_h}{m}\right) + \frac{2L\gamma_g^2}{m^2} + \gamma_g + \frac{2\gamma_g \gamma_h}{m}\\
    &\leq \eta \left(\frac{4\eta}{16}\cdot \frac{1}{4\eta} + \frac{2}{m}\cdot\frac{m}{64}\right) + \frac{2\eta^2}{256}\cdot \frac{1}{4\eta} + \frac{\eta}{16} + \frac{2}{m}\cdot \frac{\eta}{16}\cdot \frac{m}{64}\\
    & < \frac{\eta}{6}\\
    &\leq \frac{m^2}{24L}.
\end{align*}
Hence, inequality~\eqref{eq: thmpure asmp c2} is satisfied.
\end{proof}

In Propositions~\ref{PropErrsHuber} and~\ref{PropErrsHeavy}, below, we derive expressions for $(\alpha_g, \beta_g, \alpha_h, \beta_h)$ for the Huber contamination and heavy-tailed models, which will then allow us to translate the conditions of Lemma~\ref{lem: thm asmps} into assumptions involving the contamination level and/or minimum sample size required for our theoretical results to hold.

We begin with a result concerning the parameters $(\alpha_g, \beta_g, \alpha_h, \beta_h)$ controlling the robust gradient and Hessian errors.


\begin{proposition}
\label{PropErrsHuber}
Under the assumptions above, the gradient and Hessian estimates with $\text{{\sc Type}} = \textnormal{Huber}$ returned by Algorithms~\ref{alg: robust gradient estimator} and \ref{alg: robust Hessian estimator}, respectively, satisfy the conditions of Definitions~\ref{defn: robust gradient estimator} and \ref{defn: robust Hessian estimator} with the following parameters:
\begin{align*}
    \alpha_g &= c_1 (\se + \gamma(n,p,\delta,\epsilon)) \sqrt{\|\Sigma_x\|_2 \log p},\\
    \beta_g &=  c_2 (\se + \gamma(n,p,\delta,\epsilon))\sqrt{\|\Sigma_x\|_2 \log p},\\
    \alpha_h &= 0,\\
    \beta_h &= c_3 (\se + \gamma(n,p,\delta,\epsilon)) \|\Sigma_x\|_2 p \sqrt{\log p},
\end{align*}
with probability at least $1-\delta$.
\end{proposition}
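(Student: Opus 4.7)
The plan is to derive the claimed parameters by chaining together the deviation bounds from Lemmas~\ref{lem: grad est huber} and \ref{lem: Hessian est huber} (which bound the agnostic mean errors in terms of a covariance of the per-sample gradient/flattened-Hessian) with the covariance bounds from Lemmas~\ref{lem: grad est GLM} and \ref{lem: Hess est GLM} (which specialize to the GLM case), and then splitting the resulting expressions into an $\|\theta-\theta^*\|_2$ piece and a constant piece.

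For the gradient, I would first invoke Lemma~\ref{lem: grad est huber} at the fixed point $\theta$ to obtain
\[
\|g(\theta) - \nabla\cR(\theta)\|_2 \;\le\; C_1(\se + \gamma(n,p,\delta,\epsilon))\,\sqrt{\|\Cov(\D\cL(\theta,z))\|_2\,\log p},
\]
with probability at least $1-\delta/2$. I would then apply Lemma~\ref{lem: grad est GLM}, which yields a decomposition of $\|\Cov(\D\cL(\theta,z))\|_2$ into a term of the form $(\text{const})\,\|\Sigma_x\|_2\,\|\theta-\theta^*\|_2^2$ plus a constant term proportional to $\|\Sigma_x\|_2$. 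Finally I would apply the elementary bound $\sqrt{a+b} \le \sqrt{a} + \sqrt{b}$ for $a,b \ge 0$ to the square root in the displayed inequality, which separates the error into a piece linear in $\|\theta-\theta^*\|_2$ and a pure constant piece; the resulting coefficients are exactly (up to absorbing $(\sqrt{L_{\Phi,4}} + L_{\Phi,2})^{1/2}$ and analogous GLM constants into $c_1, c_2$) the claimed $\alpha_g$ and $\beta_g$, each multiplied by $(\sqrt{\epsilon}+\gamma)\sqrt{\|\Sigma_x\|_2 \log p}$. This matches the form required by Definition~\ref{defn: robust gradient estimator}.

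For the Hessian, I would invoke Lemma~\ref{lem: Hessian est huber} to get
\[
\|H(\theta) - \nabla^2\cR(\theta)\|_2 \;\le\; C_2(\se + \gamma(n,p,\delta,\epsilon))\,\sqrt{\|\Cov(\flatten(\D^2\cL(\theta,z)))\|_2 \log p},
\]
with probability at least $1-\delta/2$. The spectral norm of a positive semidefinite matrix is bounded by its trace, so Lemma~\ref{lem: Hess est GLM} gives
\[
\|\Cov(\flatten(\D^2\cL(\theta,z)))\|_2 \;\le\; \tr(\Cov(\flatten(\D^2\cL(\theta,z)))) \;\le\; \overline{M}_{\Phi,2}^2\,\widetilde{C}_4\,p^2\,\|\Sigma_x\|_2^2.
\]
Crucially, this bound is \emph{uniform} in $\theta$ because assumption~\eqref{eq: phi^(t) assump bounded} at $t=2$ controls $\Phi''$ pointwise, so no $\|\theta-\theta^*\|_2$ term appears. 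Taking square roots therefore gives a purely additive bound, so Definition~\ref{defn: robust Hessian estimator} is satisfied with $\alpha_h = 0$ and $\beta_h = c_3(\sqrt{\epsilon}+\gamma)\,\|\Sigma_x\|_2\,p\,\sqrt{\log p}$.

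The final step is a union bound over the two high-probability events, so both the gradient and Hessian estimates satisfy their respective bounds with probability at least $1-\delta$ (absorbing the factor of $2$ into the constants $c_1,c_2,c_3$, or equivalently applying each lemma at confidence $1-\delta/2$). There is no real obstacle here; the only delicate point is recognizing that the vanishing of $\alpha_h$ is a direct consequence of the uniform-in-$\theta$ nature of the GLM Hessian covariance bound (which in turn rests on $\|\Phi''\|_\infty \le \overline{M}_{\Phi,2}$), as opposed to the gradient case where the covariance genuinely scales with $\|\theta-\theta^*\|_2^2$ through the loss residual.
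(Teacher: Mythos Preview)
Your proposal is correct and follows essentially the same approach as the paper's proof: chain Lemma~\ref{lem: grad est huber} with Lemma~\ref{lem: grad est GLM} for the gradient (using $\sqrt{a+b}\le\sqrt a+\sqrt b$ to split into the $\alpha_g$ and $\beta_g$ pieces), and chain Lemma~\ref{lem: Hessian est huber} with Lemma~\ref{lem: Hess est GLM} for the Hessian via the bound $\|\cdot\|_2\le\tr(\cdot)$, noting that the latter is uniform in $\theta$ so $\alpha_h=0$. The only cosmetic difference is that the paper states each of the two bounds separately at level $1-\delta$ (the union over gradient and Hessian events is deferred to the proof of Theorem~\ref{ThmGLMHuber}), whereas you apply each lemma at level $1-\delta/2$ and union-bound here; either convention is fine.
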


\begin{proof}
By Lemma~\ref{lem: grad est GLM}, the true distribution of the gradients $\nabla \cL(\theta, z)$ has bounded fourth moments. Moreover,
\begin{align*}
    \|\Cov(\D \cL(\theta,z))\|_2
    &\leq C_1 \|\Sigma_x\|_2 \left(\sqrt{L_{\Phi, 4}} + L_{\Phi, 2} \right) \|\theta-\theta^*\|_2^2 \nonumber\\
    &\ \ \ + C_2 \|\Sigma_x\|_2 \left(B_{\Phi, 2} + \sqrt{B_{\Phi, 4}} + c(\sigma) \sqrt{M_{\Phi, 2, 2}} + \sqrt{c(\sigma)^3 M_{\Phi, 4, 1}}\right).
\end{align*}
Plugging the above bound into inequality~\eqref{eq: lem grad est huber} of Lemma~\ref{lem: grad est huber}, we obtain
\begin{align*}
    \| g(\theta) - \E[\D \cL(\theta,z)] \|_2 & \leq C_1' (\se + \gamma(n,p,\delta,\epsilon)) \sqrt{ \|\Cov(\D \cL(\theta,z)) \|_2 \log p}\\
    & \leq c_1 (\se + \gamma(n,p,\delta,\epsilon)) \sqrt{\|\Sigma_x\|_2 \log p} \cdot \|\theta-\theta^*\|_2 \\
    & \qquad + c_2 (\se + \gamma(n,p,\delta,\epsilon))\sqrt{\|\Sigma_x\|_2 \log p}.
\end{align*}
Hence, the gradient estimate returned by Algorithm~\ref{alg: robust gradient estimator} satisfies Definition~\ref{defn: robust gradient estimator} with $\alpha_g= c_1 (\se + \gamma(n,p,\delta,\epsilon)) \sqrt{\|\Sigma_x\|_2 \log p}$ and $\beta_g =  c_2 (\se + \gamma(n,p,\delta,\epsilon))\sqrt{\|\Sigma_x\|_2 \log p}$.

By Lemma~\ref{lem: Hess est GLM}, the true distribution of the flattened Hessian $\flatten(\D^2 \cL(\theta,z))$ has bounded fourth moments. Moreover, combining Lemma~\ref{lem: Hess est GLM} with Lemma~\ref{lem: Hessian est huber}, we obtain
\begin{align*}
     \| H(\theta) - \E[\D^2 \cL(\theta,z)]  \|_2 
     &\leq C_2' (\se + \gamma(n,p,\delta,\epsilon))\sqrt{\|\Cov(\flatten(\D^2\cL(\theta, z)))) \|_2 \log p}\\
     &\leq c_3 (\se + \gamma(n,p,\delta,\epsilon)) \|\Sigma_x\|_2 p \sqrt{\log p}.
\end{align*}
Hence, the Hessian estimate returned by Algorithm~\ref{alg: robust Hessian estimator} satisfies Definition~\ref{defn: robust Hessian estimator} with $\alpha_h = 0$ and $\beta_h = c_3 (\se + \gamma(n,p,\delta,\epsilon)) \|\Sigma_x\|_2 p \sqrt{\log p}$.
\end{proof}


\subsection{Proof of Theorem~\ref{ThmGLMHuber}}
\label{AppThmGLMHuber}

We will apply Theorems~\ref{ThmPure} and~\ref{ThmDamped} to show that Algorithm~\ref{alg: robust Hessian estimator} returns $\thetahat_T$ such that $\|\theta_T - \theta^*\|_2 \leq \frac{12c_2}{m} = O\left(\epsilon\log(p) + \sqrt{\epsilon\log(p)}\right)$.

Under the assumption that $\se + \gamma\left(n,p,\delta,\epsilon\right)< \widehat{\gamma}$, and using Proposition~\ref{PropErrsHuber}, the assumptions of Lemma~\ref{lem: thm asmps} are satisfied. Note that the assumptions of Lemma~\ref{lem: rob est Hub} are likewise satisfied by the condition~\eqref{EqnCondZeta}.
Applying Theorem~\ref{ThmDamped}, the risk $\cR(\theta_t)$ is reduced by at least $\frac{\gamma}{2}$ in each step of the damped Newton phase of the algorithm. Hence, the number of such iterations cannot exceed
\begin{align}\label{eq: T_damp}
    T_{damp} \defn  \frac{\cR(\theta_0) - \cR(\theta^*)}{\gamma/2}.
\end{align}
Define 
\begin{align}\label{eq: T_pure}
    T_{pure} \defn \log_2\log_2\left(\frac{6c_2L}{m^2}\right).
\end{align}
Applying Theorem~\ref{ThmPure}, we observe that after $T_{pure}$ iterations in the pure Newton phase, we have $\frac{m}{L} \left(\frac{1}{2}\right)^{2^t} < \frac{6c_2}{m}$. Therefore, from inequality~\eqref{EqnContract}, we have $\|\thetahat - \theta^*\|_2 \leq \frac{12c_2}{m}$. Combining inequalities~\eqref{eq: T_damp} and~\eqref{eq: T_pure}, we obtain the bound on the total number of iterations $T$.

From the preceding analysis on the robust gradient and Hessian estimators, observe that $\eta, L$, and $m$ are independent of $\epsilon$ and $p$, while $\gamma_g$ is $O(\sqrt{\epsilon\log p})$ and $\gamma_h$ is $O(p \sqrt{\epsilon \log p})$.
Hence, from inequality~\eqref{EqnContract}, we have $\|\theta_T - \theta^*\|_2 = O\left(p \sqrt{\epsilon \log p}\right)$.

We now compute the error probability of the algorithm via a union bound. For each of the $T$ gradient and Hessian calculations, we have a possible error of $\delta$. Furthermore, each call of backtracking linesearch incurs a possible error from the robust estimates, by Lemma~\ref{lem: rob est Hub}; once at $\theta_t$ and once for each value of $\alpha$ used in the linesearch. This is a total of $2T_{pure}$ evaluations for the pure Newton steps,  and a maximum of $T_{damp}\left(1 + \Big\lceil\frac{\log\left(\frac{m}{M}\left(\frac{1}{2} - \kappa_1\right)\right)}{\log \kappa_2}\Big\rceil\right)$ evaluations for the damped Newton steps. Thus, the overall probability of error is at most
\begin{equation*}
2T\delta + 2T_{pure}\delta + T_{damp}\left(1 + \Bigg\lceil\frac{\log\left(\frac{m}{M}\left(\frac{1}{2} - \kappa_1\right)\right)}{\log \kappa_2}\Bigg\rceil\right)\delta
\leq T\delta \left(5 + \Bigg\lceil\frac{\log\left(\frac{m}{M}\left(\frac{1}{2} - \kappa_1\right)\right)}{\log \kappa_2}\Bigg\rceil \right).
\end{equation*}


\section{Proofs about Heavy-Tailed Contamination}
\label{AppHeavy}

In this appendix, we provide proofs of the results stated in Section~\ref{sec: app heavy-tail}.

\subsection{Bounds on Error Terms}

The first result concerns the parameters $(\alpha_g, \beta_g, \alpha_h, \beta_h)$, which control the robust gradient and Hessian errors.

\begin{proposition}
\label{PropErrsHeavy}
Under the assumptions above, the gradient and Hessian estimates with $\text{{\sc Type}} = \textnormal{Heavy-tail}$ returned by Algorithms~\ref{alg: robust gradient estimator} and \ref{alg: robust Hessian estimator}, respectively, satisfy the conditions of Definitions~\ref{defn: robust gradient estimator} and~\ref{defn: robust Hessian estimator} with the following parameters:
\begin{align*}
    \alpha_g &= c_1 \sqrt{ \frac{p\log(1.4/\delta)}{n} },\\
    \beta_g &=  c_2 \sqrt{ \frac{p\log(1.4/\delta)}{n} },\\
    \alpha_h &= 0,\\
    \beta_h &= c_3 \|\Sigma_x\|_2 p \sqrt{\frac{\log(1.4/\delta)}{n}},
\end{align*}
with probability at least $1-\delta$.
\end{proposition}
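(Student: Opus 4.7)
The plan is to mimic the structure of the proof of Proposition~\ref{PropErrsHuber}, but substitute the heavy-tailed deviation bounds (Lemmas~\ref{lem: grad est heavy-tail} and~\ref{lem: Hessian est heavy-tail}) in place of the Huber deviation bounds. The key difference is that the heavy-tailed lemmas involve the \emph{trace} of the covariance of the gradient/flattened Hessian rather than the spectral norm, so I will need to convert the existing covariance bounds accordingly.

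First, for the gradient estimator, I would start from Lemma~\ref{lem: grad est GLM}, which bounds $\|\Cov(\nabla\cL(\theta,z))\|_2$ by a term linear in $\|\theta-\theta^*\|_2^2$ plus a constant depending on the link function and scale. Using the crude inequality $\tr(A)\le p\,\|A\|_2$ for a $p\times p$ positive semidefinite matrix $A$, I obtain
\begin{equation*}
\tr\bigl(\Cov(\nabla\cL(\theta,z))\bigr)\le C_1 p\|\Sigma_x\|_2\bigl(\sqrt{L_{\Phi,4}}+L_{\Phi,2}\bigr)\|\theta-\theta^*\|_2^2+C_2 p\|\Sigma_x\|_2(\cdots).
\end{equation*}
Plugging this into Lemma~\ref{lem: grad est heavy-tail} and applying $\sqrt{a+b}\le\sqrt{a}+\sqrt{b}$ yields a bound of the form $c_1\sqrt{p\log(1.4/\delta)/n}\,\|\theta-\theta^*\|_2 + c_2\sqrt{p\log(1.4/\delta)/n}$, which matches $\alpha_g$ and $\beta_g$ in the statement.

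Second, for the Hessian estimator, the argument is more direct: Lemma~\ref{lem: Hess est GLM} already provides $\tr(\Cov(\flatten(\nabla^2\cL(\theta,z))))\le \overline{M}_{\Phi,2}^{\,2}\widetilde{C}_4 p^2\|\Sigma_x\|_2^2$ with no dependence on $\theta$, so $\alpha_h=0$. Substituting this into Lemma~\ref{lem: Hessian est heavy-tail} and taking a square root gives $\beta_h=c_3\|\Sigma_x\|_2\,p\sqrt{\log(1.4/\delta)/n}$, as claimed.

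The proof is largely a bookkeeping exercise and I do not foresee a substantive obstacle; the only modestly delicate point is that Lemmas~\ref{lem: grad est heavy-tail} and~\ref{lem: Hessian est heavy-tail} must be applied at a \emph{fixed} $\theta$, so that each holds with probability at least $1-\delta$ there, matching the pointwise form of Definitions~\ref{defn: robust gradient estimator} and~\ref{defn: robust Hessian estimator}. The final probability bound of $1-\delta$ follows by taking a union over the gradient and Hessian bounds (with appropriately rescaled $\delta$, absorbed into the constants). I would conclude by simply reading off $\alpha_g,\beta_g,\alpha_h,\beta_h$ from the resulting inequalities to verify the displayed expressions.
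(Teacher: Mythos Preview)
Your proposal is correct and follows essentially the same route as the paper's proof: both use Lemma~\ref{lem: grad est GLM} together with the crude bound $\tr(A)\le p\|A\|_2$ inside Lemma~\ref{lem: grad est heavy-tail} for the gradient, and plug the trace bound of Lemma~\ref{lem: Hess est GLM} directly into Lemma~\ref{lem: Hessian est heavy-tail} for the Hessian. Your remark about the union bound over the gradient and Hessian events is, if anything, more careful than the paper, which handles the two estimators separately without explicitly combining the failure probabilities.
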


\begin{proof}
By Lemma~\ref{lem: grad est GLM}, the distribution of the gradients $\nabla \cL(\theta, z)$ has bounded fourth moments. Moreover,
\begin{align*}
    \|\Cov(\D \cL(\theta,z))\|_2
    &\leq C_1 \|\Sigma_x\|_2 \left(\sqrt{L_{\Phi, 4}} + L_{\Phi, 2} \right) \|\theta-\theta^*\|_2^2 \nonumber\\
    &\ \ \ + C_2 \|\Sigma_x\|_2 \left(B_{\Phi, 2} + \sqrt{B_{\Phi, 4}} + c(\sigma) \sqrt{M_{\Phi, 2, 2}} + \sqrt{c(\sigma)^3 M_{\Phi, 4, 1}}\right).
\end{align*}
Plugging this bound into inequality~\eqref{eq: lem grad est heavy tail} of Lemma~\ref{lem: grad est heavy-tail}, we obtain
\begin{align*}
    \| g(\theta) - \E[\D \cL(\theta,z)] \|_2 
    &\leq 11\sqrt{ \frac{\tr(\Cov(\D \cL(\theta,z)))\log(1.4/\delta)}{n}}\\
    &\leq 11\sqrt{ \frac{p\Cov(\D \cL(\theta,z))\log(1.4/\delta)}{n}}\\
    &\leq c_1 \sqrt{ \frac{p\log(1.4/\delta)}{n} }\|\theta-\theta^*\|_2 + c_2  \sqrt{ \frac{p\log(1.4/\delta)}{n} }.
\end{align*}
Hence, the gradient estimate returned by Algorithm~\ref{alg: robust gradient estimator} satisfies Definition~\ref{defn: robust gradient estimator} with $\alpha_g= c_1 \sqrt{ \frac{p\log(1.4/\delta)}{n} }$ and $\beta_g = c_2 \sqrt{ \frac{p\log(1.4/\delta)}{n} }$.

By Lemma~\ref{lem: Hess est GLM}, the distribution of the flattened Hessian $\flatten(\D^2 \cL(\theta,z))$ has bounded fourth moments. Moreover, combining Lemma~\ref{lem: Hess est GLM} with Lemma~\ref{lem: Hessian est heavy-tail}, we obtain
\begin{align*}
     \| H(\theta) - \E[\D^2 \cL(\theta,z)]  \|_2 
     &\leq C_3\sqrt{ \frac{\tr(\Cov(\flatten(\D^2 \cL(\theta,z))))\log(1.4/\delta)}{n}}\\
     &\leq c_3 \|\Sigma_x\|_2 p \sqrt{\frac{\log(1.4/\delta)}{n}}.
\end{align*}
Hence, the Hessian estimate returned by Algorithm~\ref{alg: robust Hessian estimator} satisfies Definition~\ref{defn: robust Hessian estimator} with $\alpha_h = 0$ and $\beta_h = c_3 \|\Sigma_x\|_2 \sqrt{\frac{\log(1.4/\delta)}{n}}$.
\end{proof}


\subsection{Proof of Theorem~\ref{ThmGLMHeavy}}
\label{AppThmGLMHeavy}

We will follow a similar outline as in the proof of Theorem~\ref{ThmGLMHuber}.

Using the assumption on $n$ and Proposition~\ref{PropErrsHeavy}, it is straightforward to verify that the conditions of Lemma~\ref{lem: thm asmps} are satisfied. Furthermore, the conditions of Lemma~\ref{lem: rob est heavy} are satisfied by inequality~\eqref{eq: ThmGLMHeavy n asmp}, as well.
Applying Theorem~\ref{ThmDamped}, the risk $\cR(\theta_t)$ is reduced by at least $\frac{\gamma}{2}$ in each step of the damped Newton phase of the algorithm. Hence, the number of such iterations cannot exceed $T_{damp}$, defined as in equation~\eqref{eq: T_damp}.
Applying Theorem~\ref{ThmPure}, we observe that after $T_{pure}$ iterations (defined as in equation~\eqref{eq: T_pure}) in the pure Newton phase, we have $\frac{m}{L} \left(\frac{1}{2}\right)^{2^t} < \frac{6c_2}{m}$. Therefore, from inequality~\eqref{EqnContract}, we have $\|\thetahat - \theta^*\|_2 \leq \frac{12c_2}{m}$. Combining inequalities~\eqref{eq: T_damp} and~\eqref{eq: T_pure}, we obtain the bound on the total number of iterations $T$.

From the preceding analysis on the robust gradient and Hessian estimators (cf.\ Proposition~\ref{PropErrsHeavy}), observe that $\gamma_g = O\left(\sqrt{\frac{p}{n}}\right)$ and $\gamma_h = O\left(\sqrt{\frac{p^2}{n}}\right)$. Hence, $c_2$ is $ O\left(\sqrt{\frac{p^2}{n}}\right)$. 
From inequality~\eqref{EqnContract}, we then have $\|\theta_T - \theta^*\|_2 = O\left(\sqrt{\frac{p^2}{n}}\right)$.

Computing the error probability of the algorithm via a union bound is the same as in Theorem~\ref{ThmGLMHuber} with the use of appropriate gradient, Hessian, and robust estimates. 


\section*{Acknowledgments}

The work of EI was supported by the Cantab Capital Institute for the Mathematics of Information via the Philippa Fawcett Internship programme (Faculty of Mathematics, University of Cambridge). The authors thank the associate editor and anonymous reviewers for their helpful feedback, which improved the quality of the paper.

\bibliographystyle{plain}
\bibliography{refs}

\end{document}